\pgfplotsset{compat=1.10}
\newtheorem{theorem}{Theorem}[section]
\newtheorem{lemma}{Lemma}[section]
\newtheorem{remark}{Remark}[section]
\newtheorem{definition}{Definition}[section]
\newtheorem{fact}{Fact}[section]
\newcommand{\E}{\mathop{\mathbb{E}}}
\newcommand{\cX}{\mathcal{X}}
\newcommand{\cY}{\mathcal{Y}}
\newcommand{\cZ}{\mathcal{Z}}
\newcommand{\cT}{\mathcal{T}}
\newcommand{\cA}{\mathcal{A}}
\newcommand{\cO}{\mathcal{O}}
\newcommand{\cR}{\mathcal{R}}
\newcommand{\ind}{\mathbbm{1}}
\newcommand{\updreq}{\mathtt{UpdReq}}
\newcommand{\D}{\mathcal{D}}
\newcommand{\cP}{\mathcal{P}}
\newcommand{\R}{\mathcal{R}}
\newcommand{\Z}{\mathcal{Z}}
\newcommand{\A}{\mathcal{A}}
\renewcommand{\P}{\mathcal{P}}
\newcommand{\publish}{f_{\text{publish}}}
\newcommand{\vertiii}[1]{{\left\vert\kern-0.25ex\left\vert\kern-0.25ex\left\vert #1
    \right\vert\kern-0.25ex\right\vert\kern-0.25ex\right\vert}}
\newcommand{\stateSpace}{\mathcal{S}}
\newcommand{\state}{s}
\newcommand{\cAdistr}{\cA^{\text{distr}}}
\newcommand{\cAsingle}{\cA^{\text{single}}}
\newcommand{\Sampler}{\mathtt{Sampler}}
\newcommand{\ppredict}{\mathtt{PrivatePredict}}
\title{Adaptive Machine Unlearning}
 \author[1]{Varun Gupta}
 \author[1]{Christopher Jung}
 \author[2]{Seth Neel}
 \author[1]{\authorcr Aaron Roth}
 \author[1]{Saeed Sharifi-Malvajerdi}
 \author[3]{Chris Waites}
 \affil[1]{University of Pennsylvania}
 \affil[2]{Harvard University}
 \affil[3]{Stanford University}
\begin{document}
\maketitle
\begin{abstract}
    Data deletion algorithms aim to remove the influence of deleted data points from trained models at a cheaper computational cost than fully retraining those models. However, for sequences of deletions, most prior work in the non-convex setting gives valid guarantees  only for sequences that are chosen \emph{independently} of the models that are published. If people choose to delete their data as a function of the published models (because they don't like what the models reveal about them, for example), then the update sequence is \emph{adaptive}. In this paper, we give a general reduction from deletion guarantees against adaptive sequences to deletion guarantees against non-adaptive sequences, using differential privacy and its connection to max information. Combined with ideas from prior work which give guarantees for non-adaptive deletion sequences, this leads to extremely flexible algorithms able to handle arbitrary model classes and training methodologies, giving strong provable deletion guarantees for adaptive  deletion sequences. We show in theory how prior work for non-convex models fails against adaptive deletion sequences, and use this intuition to design a practical attack against the SISA algorithm of \cite{unlearning} on CIFAR-10, MNIST, Fashion-MNIST. 
    \end{abstract}

\section{Introduction}
Businesses like Facebook and Google depend  on training sophisticated models on user data. Increasingly---in part because of regulations like the European Union’s General Data
Protection Act and the California Consumer Privacy Act---these organizations are receiving requests to delete the data of particular users. But what should that mean? It is straightforward to delete a customer's data from a database and stop using it  to train \emph{future} models. But what about models that have already been trained using an individual's data? These are not necessarily safe; it is known that individual training data can be exfiltrated from models trained in standard ways via 
 \textit{model inversion} attacks \citep{attack, veale, inversion}. Regulators are still grappling with  when a trained model should be considered to contain personal data of individuals in the training set and the potential legal implications. In  2020 draft guidance, the U.K.'s Information Commissioner's Office addressed how to comply with data deletion requests as they pertain to ML models: 
\begin{quote}
\textit{If the request is for rectification or erasure of the data, this may not be possible without re-training the model...or deleting the model altogether} \citep{ico}.
\end{quote}

Fully retraining the model  every time a deletion request is received  can be prohibitive in terms of both time and money---especially for large models and frequent deletion requests. The problem of \emph{data deletion} (also known as \emph{machine unlearning}) is to find an algorithmic middle ground between the compliant but impractical baseline of retraining, and the potentially illegal standard of doing nothing. We iteratively update  models as deletion requests come in, with the twin goals of having computational cost that is substantially less than the cost of full retraining, and the guarantee that the models we produce are (almost) indistinguishable from the models that would have resulted from full retraining.

After an initial model is deployed deletion requests arrive  over time as users make  decisions about whether to delete their data. It is easy to see how these decisions may be \textit{adaptive} with respect to the models. For example, security researchers may publish a new model inversion attack that identifies a specific subset of people in the training data, thus leading to increased deletion requests for people in that subset. In this paper we give the first machine unlearning algorithms that both have rigorous deletion guarantees against these kind of adaptive deletion sequence, and can accommodate arbitrary non-convex models like deep neural networks without requiring pretraining on non-user data. 

\subsection{Main Results}
The deletion guarantees proven for several prior methods crucially rely on the implicit assumption that the points that are deleted are independent of the randomness used to train the models. However this assumption fails unless the sequence of deletion requests is chosen  independently of the information that the model provider has made public. This is a very strong assumption, because users may wish to delete their data \emph{exactly because of what  deployed models reveal about them}.

We  give a generic reduction. We show that if:
\begin{enumerate}
    \item A data deletion algorithm $\cR_\cA$ for a learning algorithm $\cA$ has deletion guarantees for \emph{oblivious}  sequences of deletion requests (as those from past work do), and
    \item Information about the internal randomness of $\cR_\cA$ is revealed only in a manner that satisfies \emph{differential privacy}, then
\end{enumerate}
($\cA,\cR_\cA)$ \emph{also satisfies data deletion guarantees against an adaptive sequence of deletion requests}, that can depend in arbitrary ways on the information that the model provider has made public.

This generic reduction can be used to give adaptive data deletion mechanisms for a wide variety of problems by leveraging past work on deletion algorithms for non-adaptive sequences, and a  line of  work on differentially private aggregation \citep{papernot2018scalable, privatepred}. Since prior deletion algorithms themselves tend to use existing learning algorithms in a black-box way, the entire pipeline  is modular and easy to bolt-on to existing methods. In Section~\ref{sec:distributed}, we show how this can be accomplished by using a variant of the SISA framework of \cite{unlearning} together with a differentially private aggregation method. 

In Section~\ref{sec:experiment}, we complement our main result with a theoretical example and a set of experimental results on CIFAR-10, MNIST, and Fashion-MNIST  that serve to illustrate two points:
\begin{enumerate}
\item  Past method's lack of guarantees for adaptive sequences is not simply a failure of analysis, but an actual failure of these methods to satisfy deletion guarantees for adaptive deletion sequences. As an exemplar, we use two variants of SISA from \cite{unlearning} that both satisfy perfect deletion guarantees for non-adaptive deletion sequences and exhibit adaptive deletion sequences that strongly separate the resulting distribution on models compared to the retraining baseline.
\item  That differential privacy may be useful in giving adaptive guarantees beyond the statement of our theorems. Specifically we show that small amounts of noise addition (insufficient for our theorems to apply) already serve to break the adaptive deletion strategies that we use to  falsify the adaptive deletion guarantees in our experiments described in point 1.
\end{enumerate}

\subsection{Related Work}
Data deletion was introduced by \cite{CY15}; we adopt the randomized formulation of \cite{forgetu}. \cite{forgetu} anticipate the problem of deletion requests that might be correlated with internal state of the algorithm, and define (and propose as a study for future work) \emph{robust} data deletion which is a data deletion guarantee that holds for adversaries with knowledge of the internal state. Our insight is that we can provide  deletion guarantees against adaptive sequences by instead obscuring the internal state of the algorithm using techniques from differential privacy. 

We are the first  to explicitly consider the problem of adaptive sequences of deletion requests, but some techniques from past work \emph{do} have deletion guarantees that extend to adaptive sequences.  Deterministic methods and methods that depend only on randomness that is  sampled after the deletion request are already robust to adaptive deletion. This includes techniques that find an approximately optimal solution to a strongly convex problem and then perturb the solution to obscure the optimizer within a small radius e.g.  \cite{hessian, neel2021descent,sekhari2021remember}. It also includes the approach of \cite{Golatkar,golatkar2020forgetting} which pre-trains a nonconvex model on data that  will never be deleted and then does convex fine-tuning on user data on top of that. Techniques whose deletion guarantees depend on randomness sampled at training in general do not have guarantees against adaptive deletions. This includes algorithms given in \cite{forgetu,unlearning,neel2021descent} --- the SISA framework of \cite{unlearning} being of particular interest as it is  agnostic to the class of models and training methodology, and so is extremely flexible. 

Differential privacy has been used as a mitigation for adaptivity  since the work of \cite{DFHPRRstoc,DFHPRRscience}. In machine learning, it has been used to mitigate the bias of adaptive data gathering strategies as used in bandit learning algorithms \citep{NR18}.
The application that is most similar to our work is \cite{streaming}, which uses differential privacy of the internal randomness of an algorithm (as we do) to reduce streaming algorithms with  guarantees against adaptive adversarial streams to streaming algorithms with guarantees against oblivious adversaries. Our techniques differ; while \cite{streaming} reduce to the so-called ``transfer theorem for linear and low sensitivity queries'' developed over a series of works \cite{DFHPRRstoc,BNSSSU16,JLNRSSS20}, we use a more general connection between differential privacy and ``max-information'' established in \cite{dwork2015adaptive,bounded-max-info}.

\section{Preliminaries}
\label{sec:prelim}
Let $\cZ$ be the data domain. A dataset $D$ is a multi-set of elements from $\cZ$. We consider update requests of two types: deletion and addition. These update requests are formally defined below, similar to how they are defined in \citep{neel2021descent}.

\begin{definition}[Update Operations and Sequences]{}
An update $u$ is a pair $(z, \bullet)$ where $z \in \cZ$ is a datapoint and $\bullet \in \cT = \{ \mathtt{'add'}, \mathtt{'delete'} \}$ determines the type of the 
update. An update sequence $U$ is a sequence $(u^1, u^2, \ldots)$ where $u^t \in \cZ \times \cT$ for all $t$. Given a dataset $D$ and an update $u = (z, \bullet)$, the update operation is defined as:
\begin{align*}
    D \circ u \triangleq 
    \begin{cases}
        D \cup \{z\} & \text{if } \bullet = \mathtt{'add'} \\
        D \setminus \{z\} & \text{if } \bullet = \mathtt{'delete'} \\
   \end{cases}
\end{align*}
Given an update sequence $U = (u^1, u^2, \ldots)$, we have $D \circ U \triangleq (((D \circ u^1) \circ u^2) \circ \ldots )$.
\end{definition}

We use $\Theta$ to denote the space of models. A \textit{learning} or \textit{training} algorithm is a mapping $\cA : \cZ^* \rightarrow \Theta^*$ that maps a dataset $D \in \cZ^*$ to a collection of models $\theta \in \Theta^*$. An \textit{unlearning} or \textit{update} algorithm for $\cA$ is a mapping $\cR_{\cA}: \cZ^* \times (\cZ \times \cT)  \times \stateSpace \rightarrow \Theta^*$ which takes in a data set $D \in \cZ^*$, an update request $u \in \cZ \times \cT$, and some current state for the algorithms $\state \in \stateSpace$ (the domain $\stateSpace$ can be arbitrary), and outputs an updated collection of models $\theta' \in \Theta^*$. 
In this paper we consider a setting in which a stream of update requests arrive in sequence.  We note that in this sequential framework, the update algorithm $\cR_\cA$ also updates the state of the algorithm after each update request is processed; however, for notational economy, we do not explicitly write the updated state as an output of the algorithm.

At each round, we provide access to the models through a mapping $\publish^t: \Theta^* \to \Psi$  that takes in the collection of models and outputs some object $\psi \in \Psi$. A published object $\psi \in \Psi$ can, for instance, be the aggregate predictions of the learned models on a data set, or, some aggregation of the models. To model adaptively chosen update sequences, we define an arbitrary ``update requester'' who interacts with the learning and unlearning algorithms $(\A, \R_\A)$ through the publishing function $\publish$ in rounds to generate a sequence of updates. The update requester is denoted by $\updreq$ and defined  in Definition~\ref{def:updreq}, and the interaction between the algorithms and the update requester is described in Algorithm~\ref{alg:interaction}.

Throughout we will use $u^t$ to denote the update request at round $t$. We will use $D^t$ to denote the data set at round $t$: $D^0$ is the initial training data set and for all $t \ge 1$, $D^t = D^{t-1} \circ u^t$. We will use $\theta^t$ to denote the learned models at round $t$: $\theta^0$ is generated by the initial training algorithm $\cA$, and $\theta^t$ for $t \ge 1$ denotes the updated models at round $t$ generated by the update algorithm $\cR_\cA$. $\psi^t$ denotes the published object at round $t$: $\psi^t = \publish^t (\theta^t)$.

\begin{definition}[Update Requester ($\updreq$)]\label{def:updreq}
The update sequence is generated by an update requester which is modeled by a (possibly randomized) mapping $\updreq: \Psi^* \times ( \cZ \times \cT)^* \to (\cZ \times \cT)$ that takes as input the history of interaction between herself and the algorithms, and outputs a new update for the current round. Given an update requester $\updreq$,  algorithms $(\A, \R_\A)$ and publishing functions $\{ \publish^t \}_t$, the update sequence $U=\{u^t\}_t$ can be written as
\[
u^1 = \updreq \left( \psi^0\right), \ u^2 = \updreq \left( \psi^0, u^1, \psi^1\right), \ldots, \ u^t = \updreq \left( \psi^0, u^1, \psi^1, \ldots, u^{t-1}, \psi^{t-1}\right)
\]
We say an update requester $\updreq$ is nonadaptive if it is independent of the published objects, i.e., if there exists a mapping $\updreq': ( \cZ \times \cT)^* \to (\cZ \times \cT)$ such that for all $t \ge 1$,
\[
u^t = \updreq \left( \psi^0, u^1, \psi^1, u^2, \ldots, u^{t-1}, \psi^{t-1}\right) = \updreq' \left( u^1, u^2, \ldots, u^{t-1}\right)
\]
This is equivalent to saying that the update sequence is fixed before the interaction occurs.
\end{definition}

\begin{algorithm}[t]
\SetAlgoLined 
\begin{algorithmic}[1]
\STATE \textbf{Input}:  Data set $D$
\STATE Let $D^0 \gets  D$.
\STATE Train $\theta^0 \gets \cA (D)$.
\STATE Publish $\psi^0 \gets \publish^0 (\theta^0)$.
\STATE Save the initial state $s^0$.
\FOR{$t = 1, 2, \dots$}
    \STATE The update requester requests a new update, given the history of interaction:
    \STATE \hspace{1cm} $u^t \gets \updreq \left( \psi^0, u^1, \psi^1, u^2, \ldots, u^{t-1}, \psi^{t-1}\right)$.
    \STATE The algorithms update, given $u^t$: 
    \STATE \hspace{1cm} Update the models $\theta^t \gets \cR_\cA \left(D^{t-1}, u^t, s^{t-1} \right)$.
    \STATE \hspace{1cm} Publish $\psi^t \gets \publish^t \left(\theta^t \right)$.
    \STATE \hspace{1cm} Save the updated state $s^{t}$.
    \STATE \hspace{1cm} Update the data set $D^{t} \gets D^{t-1} \circ u^t$.
\ENDFOR
\end{algorithmic}
\caption{Interaction between $(\cA, \cR_\cA)$  and $\updreq$}
\label{alg:interaction}
\end{algorithm}

Following  \citep{forgetu}, we propose the following definition for an unlearning algorithm in the sequential update setting (\citep{forgetu} gives a definition for a single deletion request, whereas here we define a natural extension for an arbitrarily long sequence of deletions, as well as additions, that can be chosen adaptively.). Informally, we require that at every round, and for all possible update requesters, with high probability over the draw of the update sequence, no subset of models resulting from deletion occurs with substantially higher probability than it would have under full retraining. 

\begin{definition}[$(\alpha, \beta, \gamma)$-unlearning]\label{def:unlearning}
We say that $\cR_{\cA}$ is an $(\alpha, \beta, \gamma)$-unlearning algorithm for $\cA$, if for
all datasets $D = D^0$ and all update requesters $\mathtt{UpdReq}$, the following condition holds:
For every update step $t \geq 1$, with probability at least $1-\gamma$ over the draw of the update sequence $u^{\le t} = (u^1, \ldots, u^t)$ from $\updreq$,
\begin{align*}
    \forall E \subseteq \Theta^*: \quad 
    \Pr \left[ \cR_\cA \left(D^{t-1}, u^{t}, s^{t-1} \right) \in E \, \middle\vert \, u^{\le t}\right]
    \le
    e^{\alpha} \cdot \Pr \left[ \cA \left( D^t \right) \in E  \right] + \beta
\end{align*}
We say $\cR_{\cA}$ is a nonadaptive $(\alpha, \beta, \gamma)$-unlearning algorithm for $\cA$ if the above condition holds for any nonadaptive $\updreq$.
\end{definition}

\begin{remark}
Our definition of unlearning is reminiscent of differential privacy, but following \citep{forgetu}, we ask only for a \emph{one-sided} guarantee: that the probability of any event under the unlearning scheme is not too much larger than the probability of the same event under full retraining, but not vice versa. The reason is that we do not want there to be events that can substantially increase an observer's confidence that we did \emph{not} engage in full retraining, but we do not object to observers who strongly update their beliefs that we \emph{did} engage in full retraining. Our events $E$ are defined directly over the sets of models in $\Theta^*$ output by $\cA$ and $\cR_{\cA}$ --- note that because of information processing inequalities, this is only stronger than defining events $E$ over the observable outcome space $\Psi$. 
\end{remark}

\subsection{Differential Privacy and Max-Information}
Differential privacy will be a key tool in our results. 
Let $\cX$ denote an arbitrary data domain. We use $x \in \cX$ to denote an individual element of $\cX$, and $X \in \cX^*$ to denote a collection of elements from $\cX$ --- which we call a data set. We say two data sets $X,X' \in \cX^*$ are neighboring if they differ in at most one element. We say an algorithm $M: \cX^n \to \cO$ is differentially private if its output distributions on neighboring data sets are close, formalized below.
\begin{definition}[Differential Privacy (DP) \citep{DMNS06,DKMMN06}]
An algorithm $M: \cX^m \to \cO$  is $(\epsilon, \delta)$-differentially private, if for every neighboring $X$ and $X'$, and for every $O \subseteq \cO$, we have
$\Pr \left[ M(X) \in O \right] \le e^{\epsilon} \Pr \left[ M(X') \in O \right] + \delta.$
\end{definition}
We remark at the outset that the ``datasets'' to which we will eventually ask for differential privacy with respect to will not be the datasets on which our learning algorithms are trained, but will instead be collections of random bits parameterizing our randomized algorithms.

Differentially private algorithms are robust to data-independent post-processing:
\begin{lemma}[Post-processing preserves DP \citep{DMNS06}]\label{lem:postprocessing}
If $M: \cX^m \to \cO$ is $(\epsilon, \delta)$-differentially private, then for all $f: \cO \to \mathcal{R}$, we have $f \circ M: \cX^m \to \mathcal{R}$ defined by $f \circ M (X) = f(M(X))$ is $(\epsilon, \delta)$-differentially private.
\end{lemma}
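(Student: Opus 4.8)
The plan is to reduce the claim to an elementary statement about preimages. Fix neighboring data sets $X, X' \in \cX^m$ and an arbitrary (measurable) event $R \subseteq \mathcal{R}$. I would first treat the case where $f$ is deterministic. Define the preimage $f^{-1}(R) = \{ o \in \cO : f(o) \in R \} \subseteq \cO$. By definition of composition, the event $\{ f \circ M(X) \in R \}$ is literally the same event as $\{ M(X) \in f^{-1}(R) \}$, so
\[
\Pr[f \circ M(X) \in R] = \Pr[M(X) \in f^{-1}(R)].
\]
Now apply the $(\epsilon, \delta)$-DP guarantee of $M$ with the event $O = f^{-1}(R)$: this gives $\Pr[M(X) \in f^{-1}(R)] \le e^{\epsilon} \Pr[M(X') \in f^{-1}(R)] + \delta$, and rewriting the right-hand side back in terms of $f \circ M$ yields exactly $\Pr[f \circ M(X) \in R] \le e^{\epsilon} \Pr[f \circ M(X') \in R] + \delta$. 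Since $R$ was arbitrary, $f \circ M$ is $(\epsilon, \delta)$-DP, which is what we want.

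Next I would cover randomized post-processing, i.e.\ the case where $f$ is allowed to use its own internal coins (this is the version typically needed in applications, even if the statement as written looks deterministic). Write $f(o) = g(o; r)$, where $r$ is randomness drawn independently of $M$ and $g(\cdot; r)$ is deterministic for each fixed value of $r$. Conditioning on $r = \rho$, the deterministic argument above applies to the map $g(\cdot; \rho)$ and gives $\Pr[g(M(X); \rho) \in R] \le e^{\epsilon} \Pr[g(M(X'); \rho) \in R] + \delta$ for every $\rho$. Taking expectations over $\rho$ on both sides --- using independence of $r$ from $M$ together with linearity and monotonicity of expectation --- preserves the inequality with the same constants $\epsilon, \delta$, and the two sides become $\Pr[f \circ M(X) \in R]$ and $e^{\epsilon} \Pr[f \circ M(X') \in R] + \delta$ respectively.

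There is essentially no hard step here; the lemma is a one-line measure-theoretic fact, and the only points requiring any care are (i) measurability, namely that $f^{-1}(R)$ is a genuine event in $\cO$, which holds whenever $f$ and $R$ are measurable, and (ii) in the randomized case, the interchange of the expectation over $f$'s coins with the probability over $M$, which is justified by Tonelli's theorem since all the integrands are nonnegative and bounded. The conceptual takeaway worth stressing is the one the paper will exploit later: the ``data set'' in this lemma is completely abstract, so the result will be invoked with $X$ being a string of random bits parameterizing the algorithm rather than the training data, and the post-processing map $f$ will be the entire downstream unlearning-and-publishing pipeline.
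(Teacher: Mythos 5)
Your argument is correct and is exactly the standard preimage/post-processing argument; the paper itself does not prove this lemma but simply cites \cite{DMNS06}, and your write-up (including the extension to randomized $f$ by conditioning on its coins and averaging) matches the canonical proof that citation refers to. Nothing is missing.
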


The max-information between two jointly distributed random variables measures how close their joint distribution is to the product of their corresponding marginal distributions.
\begin{definition}[Max-Information \citep{dwork2015adaptive}]\label{def:max-info}
Let $X$ and $Y$ be jointly distributed random variables over the domain $(\cX, \cY)$.
 The $\beta$-approximate max-information between $X$ and $Y$ is:
\begin{align*}
    I_{\infty}^\beta(X; Y) = \log \sup_{E \subseteq (\cX, \cY), \Pr[(X, Y) \in E] > \beta} \frac{\Pr[(X, Y) \in E] - \beta}{\Pr[(X \otimes Y) \in E]}
\end{align*}
where $(X \otimes Y)$ represents the product distribution of $X$ and $Y$.
\end{definition}

The max-information of an algorithm $M$ that takes a dataset $X$ as input and outputs $M(X)$, is defined as the max-information between $X$ and $M(X)$ for the worst case product distribution over $X$:
\begin{definition}[Max-Information of an Algorithm \citep{dwork2015adaptive}]
Let $M: \cX^m \to \cO$ be an Algorithm. We say $M$ has $\beta$-approximate max-information of $k$, written $I_\infty^\beta (M,m) \le k$, if for every distribution $\cP$ over $\cX$, we have $I_\infty^\beta (X; M(X)) \le k$ when $X \sim \cP^m$.
\end{definition}

In this paper, we will use the fact that differentially private algorithms have bounded max-information:
\begin{theorem}[DP implies bounded max-information \citep{bounded-max-info}]\label{thm:DPmaxinfo}
Let $M: \cX^m \rightarrow \cO$ be an $(\epsilon, \delta)$-differentially private algorithm for $0 < \epsilon \le 1/2$ and $0 < \delta < \epsilon$. Then,
$I_\infty^\beta (M,m) = O\left(\epsilon^2 m + m \sqrt{\delta / \epsilon}\right)$ for $\beta = e^{-\epsilon^2 m} + O\left(m\sqrt{\delta / \epsilon}\right)$.
\end{theorem}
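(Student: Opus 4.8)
The plan is to follow the information-density route from differential privacy to max-information used in \cite{dwork2015adaptive,bounded-max-info}. Fix an arbitrary product distribution $\cP^m$ over $\cX^m$, let $X\sim\cP^m$ and $Y=M(X)$ with joint law $p_{X,Y}$ and marginals $p_X,p_Y$, and write $\imath(x;y)=\log\frac{p_{X,Y}(x,y)}{p_X(x)\,p_Y(y)}=\log\frac{p_{Y\mid X=x}(y)}{p_Y(y)}$ for the information density. My first step is the elementary observation that a joint tail bound on $\imath$ implies a max-information bound: if $\Pr_{(X,Y)\sim p_{X,Y}}[\imath(X;Y)>k]\le\beta$, then for any event $E$ I split $E$ along $\{\imath\le k\}$ and $\{\imath>k\}$ --- on the former $p_{X,Y}=e^{\imath}p_Xp_Y\le e^{k}p_Xp_Y$, so its $p_{X,Y}$-mass is at most $e^{k}\Pr_{p_X\otimes p_Y}[E]$; the latter has $p_{X,Y}$-mass at most $\beta$ --- and summing gives $\Pr_{p_{X,Y}}[E]\le e^{k}\Pr_{p_X\otimes p_Y}[E]+\beta$, i.e.\ $I_\infty^\beta(X;Y)\le k\log e$. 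Everything then reduces to showing that $\imath(X;Y)=O(\epsilon^2 m+m\sqrt{\delta/\epsilon})$ except with probability $e^{-\epsilon^2 m}+O(m\sqrt{\delta/\epsilon})$ under the joint law.

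Next I would treat the pure case $\delta=0$ by a sequential-resampling (super)martingale. Fixing an output $y$, I set $h_i(x_1,\dots,x_i)=\E_{X_{i+1},\dots,X_m\sim\cP}\big[\Pr[M(x_1,\dots,x_i,X_{i+1},\dots,X_m)=y]\big]$ for $0\le i\le m$, so $h_0=p_Y(y)$, $h_m(x)=p_{Y\mid X=x}(y)$, and the telescoping increments $Z_i=\log h_i(X_{\le i})-\log h_{i-1}(X_{\le i-1})$ sum to $\imath(X;y)$. Because $h_{i-1}(X_{\le i-1})=\E_{X_i}[h_i(X_{\le i})]$, Jensen gives $\E[Z_i\mid X_{\le i-1}]\le 0$; and because $\epsilon$-DP applied to coordinate $i$ (after averaging out $X_{>i}$) gives $e^{-\epsilon}h_{i-1}\le h_i(X_{\le i})\le e^{\epsilon}h_{i-1}$, we have $|Z_i|\le\epsilon$. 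So the partial sums form a supermartingale with increments in $[-\epsilon,\epsilon]$ starting at $0$, and Azuma--Hoeffding yields $\Pr_{X\sim\cP^m}[\imath(X;y)>t]\le e^{-t^2/(2m\epsilon^2)}$ for every fixed $y$.

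The step I expect to need the most care is converting this bound (which is under $X\sim\cP^m$, with $y$ frozen) into the joint tail bound required above. I would change measure: for fixed $y$,
\[
\Pr_{X\sim p_{X\mid Y=y}}[\imath(X;y)>k]=\int_{\imath(x;y)>k}e^{\imath(x;y)}p_X(x)\,dx\le\sum_{j\ge 0}e^{k+j+1}\,\Pr_{X\sim\cP^m}[\imath(X;y)>k+j],
\]
and then choose $k$ to be a sufficiently large constant multiple of $m\epsilon^2$ so that plugging in the Azuma bound makes the right-hand side a geometrically decaying sum bounded by $e^{-\Omega(m\epsilon^2)}$ uniformly in $y$ (tuning the constant to produce $e^{-\epsilon^2 m}$). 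Averaging over $Y$ gives the joint tail bound, and the first step then closes the pure-DP case with $k=O(\epsilon^2 m)$.

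Finally, for $\delta>0$ --- the main obstacle --- the difficulty is that $(\epsilon,\delta)$-DP does \emph{not} give the pointwise ratio bound $|Z_i|\le\epsilon$ that drives the martingale step. I would invoke the standard reduction that an $(\epsilon,\delta)$-DP mechanism is $\epsilon$-indistinguishable on neighboring inputs except on an event of probability $O(\sqrt{\delta/\epsilon})$ (equivalently, its privacy-loss random variable exceeds $\epsilon$ only with probability $O(\sqrt{\delta/\epsilon})$), truncate the mechanism on this bad event at each of the $m$ coordinates --- which perturbs the relevant distributions by $O(m\sqrt{\delta/\epsilon})$ in total variation by a union bound --- and run the supermartingale argument on the complementary good event. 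The $O(m\sqrt{\delta/\epsilon})$ total-variation slack is precisely what surfaces as the additive $O(m\sqrt{\delta/\epsilon})$ terms in both the max-information bound and in $\beta$; making the truncation compatible with the supermartingale bookkeeping, and pinning down the exact constants, is the delicate part, and is carried out in \cite{bounded-max-info}.
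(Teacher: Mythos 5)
You should first be aware that the paper does not prove this statement at all: Theorem~\ref{thm:DPmaxinfo} is imported as a black box from \cite{bounded-max-info}, and the only thing the appendix adds is a restatement in tail-bound form (Lemma~\ref{lem:maxinf}, again without proof). So there is no in-paper argument to compare against; what you have written is a reconstruction of the cited result. Judged on those terms, your pure-DP half is essentially the known argument of \cite{dwork2015adaptive} and is sound. The reduction from a joint tail bound on the information density to a max-information bound is correct (split any event $E$ along $\{\imath\le k\}$). The doob-style decomposition $h_i(x_{\le i})=\E_{X_{>i}}\Pr[M(x_{\le i},X_{>i})=y]$ does give $|Z_i|\le\epsilon$ under pure DP and $\E[Z_i\mid X_{\le i-1}]\le 0$ by Jensen, Azuma gives the $e^{-t^2/(2m\epsilon^2)}$ tail under $X\sim\cP^m$, and your change-of-measure with geometric shells correctly transfers this to the conditional law $p_{X\mid Y=y}$ at the cost of taking $k$ a constant factor larger than $m\epsilon^2$. (Two small caveats: the shell sum gives $C'e^{-\Omega(m\epsilon^2)}$ rather than exactly $e^{-\epsilon^2 m}$, which matters only when $m\epsilon^2$ is tiny; and an alternative, which is what \cite{dwork2015adaptive} actually do, is to run the martingale directly under the posterior $p_{X\mid Y=y}$ with increments $\log\bigl(p(x_i\mid x_{<i},y)/p(x_i)\bigr)$, avoiding the change of measure entirely.)

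The genuine gap is the $(\epsilon,\delta)$ case, which is precisely the contribution of \cite{bounded-max-info} and the source of every $m\sqrt{\delta/\epsilon}$ term in the statement; your proposal outlines it and then explicitly defers the details to the reference, so as a standalone proof it is incomplete exactly where the theorem is hardest. Moreover the one quantitative ingredient you do commit to is not the right lemma: $(\epsilon,\delta)$-DP gives that the pointwise privacy loss between neighboring inputs exceeds roughly $2\epsilon$ with probability $O(\delta/\epsilon)$, not $O(\sqrt{\delta/\epsilon})$; the square root does not come from the per-coordinate truncation itself but from optimizing a trade-off between (i) the total-variation cost of the truncation accumulated over $m$ coordinates and (ii) the slack one must add to the max-information bound $k$ to absorb the truncated mass, and this balancing is what produces $O(m\sqrt{\delta/\epsilon})$ in both $k$ and $\beta$ simultaneously. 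Making the truncation compatible with the supermartingale (the truncated increments no longer have conditional mean exactly $\le 0$, and the bad events for different coordinates depend on the realized prefix) is nontrivial bookkeeping that your sketch does not supply. Since the paper itself only cites the theorem, this is not a defect relative to the paper, but you should not present the $\delta>0$ argument as established by your outline.
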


\section{A Reduction from Adaptive to Nonadaptive Update Requesters}
\label{sec:reduction}
In our analysis we imagine without loss of generality that the learning algorithm $\cA$ draws an $i.i.d.$ sequence of random variables $r \sim \cP^m$ (that encodes all the randomness to be used over the course of the updates) from some distribution $\cP$, and passes it to the unlearning algorithm $\cR_\cA$. Note $r$ is drawn once in the initial training, and given $r$, $\cA$ and $\cR_\cA$ become deterministic mappings. We can also view  the state $s^t$ as a deterministic mapping of $r$, the update requests so far $u^{\le t} = (u^1, \ldots, u^{t})$, and the original data set $D^0$. We write $s^t = g^t (D^0, u^{\le t}, r)$ for some  deterministic mapping $g^t$. We can therefore summarize the trajectory of the algorithms $(\cA, \cR_\cA)$ as follows.
\begin{itemize}
    \item $t=0$: draw $r \sim \cP^m$, let $\theta^0 = \cA (D) \equiv \cA (D; r)$, and $\psi^0 = \publish^0 \left( \theta^0 \right)$.
    \item $t \ge 1$: $\theta^t = \cR_\cA (D^{t-1}, u^{t}, s^{t-1})$ where $s^{t-1} = g^{t-1} (D^0, u^{\le t-1}, r)$, and $\psi^t = \publish^t \left( \theta^t \right)$.
\end{itemize}

In this view, the  randomness $r$ used by the  learning algorithm $\cA$ and the subsequent invocations of the unlearning algorithm $\cR_{\cA}$ is represented as part of the internal state. Past analyses of unlearning algorithms have crucially assumed that $r$ is statistically independent of the updates $(u^1, u^2, \ldots)$ (which is the case for non-adaptive update requesters, but not for adaptive update requesters). In the following general theorem, we show that if a learning/unlearning pair satisfies unlearning guarantees against non-adaptive update requesters, and the publishing function is differentially private \emph{in the internal randomness} $r$, then the resulting algorithms also satisfy unlearning guarantees against adaptive update requesters. Note that what is important is that the publishing algorithms are differentially private in the \emph{internal randomness $r$}, not in the datapoints used for training.

\begin{restatable}[A General Theorem]{theorem}{thmGeneralTheorem}
\label{thm:general-theorem}
Fix a pair of learning and unlearning algorithms $(\cA, \cR_\cA)$ and the publishing functions $\{ \publish^t \}_t$. Suppose for every round $t$, the sequence of publishing functions $\{ \publish^{t'} \}_{t' \le t}$ is $(\epsilon,\delta)$-differentially private in $r \sim \cP^m$, for $0 < \epsilon \le 1/2$ and $0 < \delta < \epsilon$. Suppose $\cR_\cA$ is a non-adaptive $(\alpha, \beta, \gamma)$-unlearning algorithm for $\cA$. Then  $\cR_\cA$ is an $(\alpha', \beta', \gamma')$-unlearning algorithm for $\cA$ for 
$
\alpha' = \alpha + \epsilon',
\beta' = \beta e^{\epsilon'} + \sqrt{\delta'},
\gamma' = \gamma+ \sqrt{\delta'}
$
where $\epsilon' = O\left(\epsilon^2 m + m \sqrt{\delta / \epsilon}\right)$ and
$\delta' =  e^{-\epsilon^2 m} + O\left( m\sqrt{\delta / \epsilon}\right)$.
\end{restatable}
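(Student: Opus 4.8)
The plan is to treat the internal randomness $r \sim \cP^m$ as a ``dataset'' and the adaptively produced update sequence $u^{\le t}$ as a data-dependent ``query'' computed only from differentially private views of $r$; Theorem~\ref{thm:DPmaxinfo} then lets us pretend $u^{\le t}$ was chosen non-adaptively, at the cost of the stated parameter blow-ups. Fix an adaptive $\updreq$, a dataset $D = D^0$, and a round $t\ge 1$. For a fixed sequence $\bar u$ write $\theta^t_{\mathrm{na}}(\bar u,r) = \cR_\cA\!\left(D^0\circ\bar u^{\le t-1},\,\bar u^t,\,g^{t-1}(D^0,\bar u^{\le t-1},r)\right)$ for the round-$t$ model the pipeline would output on the \emph{constant} sequence $\bar u$, and let $\nu_{\bar u} = \mathrm{Law}(\cA(D^0\circ\bar u))$, $\mu^{\mathrm{na}}_{\bar u} = \mathrm{Law}_{r\sim\cP^m}(\theta^t_{\mathrm{na}}(\bar u,r))$, and $\mu_{\bar u} = \mathrm{Law}(\theta^t\mid u^{\le t}=\bar u)$ under the actual adaptive interaction. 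The structural point is that on the event $\{u^{\le t}=\bar u\}$ the inputs to $\cR_\cA$ at round $t$ are pinned down to $D^0\circ\bar u^{\le t-1}$, $\bar u^t$, $g^{t-1}(D^0,\bar u^{\le t-1},r)$, so $\theta^t = \theta^t_{\mathrm{na}}(\bar u,r)$; hence $\mu_{\bar u}$ is the pushforward through $r\mapsto\theta^t_{\mathrm{na}}(\bar u,r)$ of the \emph{posterior} of $r$ given $\{u^{\le t}=\bar u\}$, whereas $\mu^{\mathrm{na}}_{\bar u}$ uses the prior $\cP^m$. Writing $\approx_{a,b}$ for the one-sided closeness of Definition~\ref{def:unlearning} ($\mu\approx_{a,b}\nu$ iff $\mu(E)\le e^a\nu(E)+b$ for all $E$), the goal reduces to showing $\Pr_{u^{\le t}}[\mu_{u^{\le t}}\not\approx_{\alpha',\beta'}\nu_{u^{\le t}}]\le\gamma'$.

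I would then assemble two ingredients. (i) Since the update requester is by definition a function of the published objects it has seen, $u^{\le t}$ is a (possibly randomized) post-processing of the transcript $(\psi^0,\dots,\psi^{t-1})$, which is $(\epsilon,\delta)$-DP in $r$ by hypothesis; by Lemma~\ref{lem:postprocessing} the map $r\mapsto u^{\le t}$ is $(\epsilon,\delta)$-DP, and since $r\sim\cP^m$ for a product distribution, Theorem~\ref{thm:DPmaxinfo} gives $I_\infty^{\delta'}(r;u^{\le t})\le\epsilon'$ with $\epsilon',\delta'$ as in the statement. (ii) The requester that draws the whole sequence once from the law $P$ of $u^{\le t}$ and reveals it coordinate by coordinate is a legitimate \emph{non-adaptive} $\updreq$ whose update-sequence law is exactly $P$; applying the non-adaptive $(\alpha,\beta,\gamma)$-guarantee to it (and noting that there the conditional law of $\theta^t$ given $u^{\le t}=\bar u$ is $\mu^{\mathrm{na}}_{\bar u}$, because the sequence is drawn independently of $r$) yields $\Pr_{u^{\le t}\sim P}[\mu^{\mathrm{na}}_{u^{\le t}}\not\approx_{\alpha,\beta}\nu_{u^{\le t}}]\le\gamma$.

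The core is a max-information transfer. Call $\bar u$ \emph{adaptively bad} if $\mu_{\bar u}\not\approx_{\alpha',\beta'}\nu_{\bar u}$ and \emph{non-adaptively good} if $\mu^{\mathrm{na}}_{\bar u}\approx_{\alpha,\beta}\nu_{\bar u}$, and let $B$ be the set of $\bar u$ that are both; for each $\bar u$ pick an event $E^*_{\bar u}$ (essentially the likelihood-ratio region $\{\theta:\mathrm{d}\mu_{\bar u}/\mathrm{d}\nu_{\bar u}>e^{\alpha'}\}$) witnessing adaptive badness when it holds. Let $\mathcal{O} = \{(r,\bar u): \bar u\in B\text{ and }\theta^t_{\mathrm{na}}(\bar u,r)\in E^*_{\bar u}\}$. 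Then $\Pr_{(r,u^{\le t})}[\mathcal{O}] = \E_{u^{\le t}}[\ind[u^{\le t}\in B]\,\mu_{u^{\le t}}(E^*_{u^{\le t}})]$ and $\Pr_{r\otimes u^{\le t}}[\mathcal{O}] = \E_{u^{\le t}}[\ind[u^{\le t}\in B]\,\mu^{\mathrm{na}}_{u^{\le t}}(E^*_{u^{\le t}})]$, so ingredient (i) gives $\E_{u^{\le t}}[\ind[u^{\le t}\in B]\,(\mu_{u^{\le t}}(E^*_{u^{\le t}})-e^{\epsilon'}\mu^{\mathrm{na}}_{u^{\le t}}(E^*_{u^{\le t}}))]\le\delta'$. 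On the other hand, for $\bar u\in B$, combining $\mu_{\bar u}(E^*_{\bar u})>e^{\alpha'}\nu_{\bar u}(E^*_{\bar u})+\beta'$ with $\nu_{\bar u}(E^*_{\bar u})\ge e^{-\alpha}(\mu^{\mathrm{na}}_{\bar u}(E^*_{\bar u})-\beta)$ and $\alpha'=\alpha+\epsilon'$, $\beta'=e^{\epsilon'}\beta+\sqrt{\delta'}$ gives $\mu_{\bar u}(E^*_{\bar u})-e^{\epsilon'}\mu^{\mathrm{na}}_{\bar u}(E^*_{\bar u})>\sqrt{\delta'}$; hence $\sqrt{\delta'}\cdot\Pr_{u^{\le t}}[u^{\le t}\in B]\le\delta'$, i.e.\ $\Pr[u^{\le t}\in B]\le\sqrt{\delta'}$. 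Since $\{\mu_{u^{\le t}}\not\approx_{\alpha',\beta'}\nu_{u^{\le t}}\}\subseteq\{u^{\le t}\in B\}\cup\{\mu^{\mathrm{na}}_{u^{\le t}}\not\approx_{\alpha,\beta}\nu_{u^{\le t}}\}$, adding the bound $\gamma$ from ingredient (ii) yields $\Pr_{u^{\le t}}[\mu_{u^{\le t}}\not\approx_{\alpha',\beta'}\nu_{u^{\le t}}]\le\sqrt{\delta'}+\gamma=\gamma'$, which is exactly the $(\alpha',\beta',\gamma')$-unlearning condition.

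The step I expect to be the main obstacle is engineering the joint event $\mathcal{O}$ so that its \emph{product} measure is genuinely small: the naive choice ``$\theta^t\in E$'' has product-probability about $\nu_{\bar u}(E)$, which need not be small, so one must fold the non-adaptive guarantee into the event and rely on the cancellation of the $\mu^{\mathrm{na}}_{\bar u}(E^*_{\bar u})$ term across the two sides of the max-information inequality — this is precisely what costs a $\sqrt{\delta'}$ rather than a $\delta'$ in both $\beta'$ and $\gamma'$. Secondary technicalities are a measurable selection of $\bar u\mapsto E^*_{\bar u}$ (so that $\mathcal{O}$ is an event), existence of the regular conditional law $\mu_{\bar u}$, and checking that conditioning on $\{u^{\le t}=\bar u\}$ really does collapse $\cR_\cA$ to the map $\theta^t_{\mathrm{na}}(\bar u,\cdot)$ even for a randomized requester (it does, since $\theta^t$ depends on the requester only through $u^{\le t}$).
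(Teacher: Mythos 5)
Your proposal is correct and follows the same overall reduction as the paper: the update sequence $u^{\le t}$ is a post-processing of the DP transcript and hence $(\epsilon,\delta)$-DP in $r$; max-information then lets you swap the posterior of $r$ given $u^{\le t}$ for the prior $\cP^m$, which reduces to the non-adaptive guarantee; and a union bound over the two failure events gives $\gamma' = \gamma + \sqrt{\delta'}$. Where you genuinely diverge is in how the max-information transfer is executed. The paper uses a pointwise form of the bound (a high-probability bound on the log-likelihood ratio $\log(\Pr[r\mid u^{\le t}]/\Pr[r])$, stated as a separate lemma in the appendix), applies Markov's inequality to carve out a ``good'' set $G$ of sequences of mass $1-\sqrt{\delta'}$, and on $G$ derives the uniform statement that $\Pr[r\in F\mid u^{\le t}]\le e^{\epsilon'}\Pr[r\in F]+\sqrt{\delta'}$ for \emph{all} events $F$ simultaneously. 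You instead work directly with the event-form max-information bound as stated in the preliminaries, applied to a single engineered joint event $\mathcal{O}$ that bundles the bad set $B$ with per-sequence witness events $E^*_{\bar u}$, and extract $\Pr[B]\le\sqrt{\delta'}$ from the resulting inequality. The two are essentially dual: your step ``$\sqrt{\delta'}\cdot\Pr[B]\le\delta'$'' plays exactly the role of the paper's Markov argument, and both pay the same $\sqrt{\delta'}$ in $\beta'$ and $\gamma'$. Your route has the advantage of needing only the black-box event form of Theorem~\ref{thm:DPmaxinfo} (no pointwise likelihood-ratio lemma, no regular conditional distribution of $r$ given $u^{\le t}$ beyond what is needed to define $\mu_{\bar u}$), at the cost of a measurable-selection step for $\bar u\mapsto E^*_{\bar u}$; the paper's route yields the stronger and reusable intermediate fact that for most sequences the conditional law of $r$ is $(\epsilon',\sqrt{\delta'})$-indistinguishable from the prior uniformly over events, which is also what its run-time analysis (Theorem~\ref{thm:run-time}) reuses. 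The parameter bookkeeping in your final step ($\alpha'=\alpha+\epsilon'$, $\beta'=\beta e^{\epsilon'}+\sqrt{\delta'}$, with the $\beta$ terms cancelling exactly) checks out.
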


The proof can be found in the Appendix, but at an intuitive level, it proceeds as follows. Because it does not change the joint distribution on update requests and internal state, we can imagine in our analysis that $r$ is redrawn after each update request from its conditional distribution, conditioned on the observed update sequence so far. Because the publishing function is differentially private in $r$, by the fact that post-processing preserves differential privacy (Lemma~\ref{lem:postprocessing}), so is the update sequence. We may therefore apply the max-information bound (Theorem \ref{thm:DPmaxinfo}), which allows us to relate the conditional distribution on $r$ to its original (prior) distribution $\cP^m$. But resampling $r$ from $\cP^m$ removes the dependence between $r$ and the update sequence, which places us in the non-adaptive case, and allows us to apply the hypothesized unlearning guarantees for nonadaptive update requesters.

\section{Distributed Algorithms}\label{sec:distributed}
In this section, we describe a general family of distributed learning and unlearning algorithms that are in the spirit of the ``SISA'' framework of \cite{unlearning} (with one crucial modification). At a high level, the SISA framework operates by first randomly dividing the data into $k$ ``shards'', and separately training a model on each shard. When a new point is deleted, it is removed from the shards that contained it, and only the models corresponding to those shards are retrained. The flexibility of this methodology is that the models and training procedures used in each shard can be arbitrary, as can the aggregation done at the end to convert the resulting ensemble into predictions: however these choices are instantiated, this framework gives a $(0,0,0)$-unlearning algorithm against any non-adaptive update requester (Lemma~\ref{lem:non-adaptive}). Here we show that if the $k$ shards are selected \emph{independently} of one another, then we can apply our  reduction given in the previous section with $m = k$ and obtain algorithms that satisfy deletion guarantees against adaptive update requesters.

A \emph{distributed} learning algorithm $\cAdistr: \cZ^* \to \Theta^*$ is described by a single-shard learning algorithm $\cAsingle: \cZ^* \to \Theta$ and a routine $\Sampler$, used to select the points in a shard. $\Sampler$, given a dataset $D$ and some probability $p \in [0,1]$, includes each element of $D$ in the shard with probability $p$. 

Distributed learning algorithm $\cAdistr$ creates $k$ independent shards from the  dataset $D$ of size $n$ by running $\Sampler$ $k$ times and training a model with $\cAsingle$ on each shard $i \in [k]$ to form an ensemble of $k$ models. To emphasize that the randomness across shards is independent, we will instantiate $k$ independent samplers $\Sampler_i$ and training algorithms $\cAsingle_i$ for each shard $i \in [k]$. 
We formally describe $\cAdistr$ in Algorithm~\ref{alg:learning-distributed0}.

\begin{algorithm}[t]
\SetAlgoLined 
\begin{algorithmic}
\STATE \textbf{Input}: dataset $D \equiv D^0$ of size $n$
    \STATE Draw the shards: $D^0_i = \Sampler(D^0, p)$, for every $i \in [k]$.
    \STATE Train the models: $\theta^0_i = \cAsingle (D^0_i)$, for every $i \in [k]$.
\STATE Save the state: $s^0 = (\{D^0_i\}_{i \in [k]}, \{\theta^0_i\}_{i \in [k]})$ \tcp{to be used for the 1st update.}
\STATE \textbf{Output}: $\{\theta^0_i\}_{i \in [k]}$
\end{algorithmic}
\caption{$\cAdistr$: Distributed Learning Algorithm}
\label{alg:learning-distributed0}
\end{algorithm}

The state $s$ of the unlearning algorithm  $\cR_{\cAdistr}$ records  the $k$ shards $\{D_i\}_{i}$ and the ensemble of $k$ models $\{\theta_i\}_i$. Thus $\stateSpace = \{\cZ^*\}^k \times \Theta^k$. As an update request $u$ is received, the update function removes the data point from every shard that contains it (for deletion) or adds the new point to each shard with probability $p$ (for addition). In either case, only the models corresponding to shards that have been updated are retrained using $\cAsingle$. We formally describe $\cR_{\cAdistr}$ in Algorithm~\ref{alg:unl-distributed}.

\begin{algorithm}[t]
\SetAlgoLined 
\begin{algorithmic}
\STATE \textbf{Input}: dataset $D^{t-1}$, update $u^t=(z^t, \bullet^t)$, state $s^{t-1} = (\{D^{t-1}_i\}_{i \in [k]}, \{\theta^{t-1}_i\}_{i \in [k]})$
\IF{$\bullet^t = \mathtt{'delete'}$}
	\STATE $S = \{i \in [k]: z^t \in D^{t-1}_i\}$ \tcp{the shards $z^t$ belongs to.}
\ELSE
    \STATE $S = \{i \in [k]: \Sampler_i(\{z^t\}, p) \neq \{\}\}$ \tcp{the shards $z^t$ will be added to.}
\ENDIF

\STATE Update the shards: $D^{t}_i = \begin{cases}
	D^{t-1}_i \circ u^t & \text{if $i \in S$}\\
	D^{t-1}_i & \text{otherwise}
\end{cases}$, for every $i \in [k]$.
\STATE Update the models: $\theta^{t}_i = \begin{cases}
	\cAsingle (D^{t}_i) & \text{if $i \in S$}\\
	\theta^{t-1}_i & \text{otherwise}
\end{cases}$, for every $i \in [k]$.
\STATE Update the state: $s^t = (\{D^t_i\}_{i \in [k]}, \{\theta^t_i\}_{i \in [k]})$ \tcp{to be used for the next update.}
\STATE \textbf{Output}: $\{\theta^t_i\}_{i \in [k]}$
\end{algorithmic}
\caption{$\cR_{\cAdistr}$: Distributed Unlearning Algorithm: $t$'th round of unlearning}
\label{alg:unl-distributed}
\end{algorithm}

First, we show that if the update requester is non-adaptive, $\cR_{\cAdistr}$ is a $(0,0,0)$-unlearning algorithm:

\begin{restatable}[]{lemma}{lemNonAdaptive}
\label{lem:non-adaptive}
$\cR_{\cAdistr}$ is a non-adaptive $(0, 0, 0)$-unlearning algorithm for $\cAdistr$.
\end{restatable}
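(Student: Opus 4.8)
The statement to prove is Lemma~\ref{lem:non-adaptive}: that $\cR_{\cAdistr}$ is a non-adaptive $(0,0,0)$-unlearning algorithm for $\cAdistr$. By Definition~\ref{def:unlearning}, fixing any dataset $D = D^0$ and any \emph{nonadaptive} update requester $\updreq$, we must show that for every round $t \ge 1$, with probability $1$ over the draw of the update sequence $u^{\le t}$, and for every event $E \subseteq \Theta^*$,
\[
\Pr\left[\cR_{\cAdistr}(D^{t-1}, u^t, s^{t-1}) \in E \mid u^{\le t}\right] = \Pr\left[\cAdistr(D^t) \in E\right].
\]
Since $\updreq$ is nonadaptive, the update sequence $u^{\le t}$ is fixed independently of all internal randomness, so the conditioning on $u^{\le t}$ is vacuous, and in particular $D^t = D^0 \circ u^{\le t}$ is a fixed deterministic dataset. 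The plan is therefore to establish the stronger, cleaner statement: for a fixed update sequence $u^{\le t}$ (equivalently, a fixed target dataset $D^t$), the distribution of the ensemble $\{\theta^t_i\}_{i \in [k]}$ produced by running Algorithms~\ref{alg:learning-distributed0} and~\ref{alg:unl-distributed} is \emph{identical} to the distribution of $\cAdistr(D^t)$ obtained by training from scratch on $D^t$.

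\textbf{Key steps.} First, I would decompose both distributions coordinatewise across the $k$ shards. Because the $k$ samplers $\Sampler_i$ and single-shard learners $\cAsingle_i$ use independent randomness, both the retrained ensemble $\cAdistr(D^t)$ and the iteratively-updated ensemble $\{\theta^t_i\}_i$ are product distributions over $i \in [k]$; hence it suffices to prove, for each fixed $i$, that the marginal law of $\theta^t_i$ under the update process equals the law of $\cAsingle_i(\Sampler_i(D^t, p))$. Second, for a fixed shard $i$, I would argue by induction on $t$ that the shard contents $D^t_i$ have exactly the distribution $\Sampler_i(D^t, p)$, i.e. that the induced distribution on subsets of $D^t$ is the one where each element of $D^t$ is included independently with probability $p$. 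The base case is the definition of $D^0_i$. For the inductive step, consider the update $u^t = (z^t, \bullet^t)$:
\begin{itemize}
\item If $\bullet^t = \mathtt{'delete'}$, then $z^t \in D^{t-1}$ and $D^t = D^{t-1} \setminus \{z^t\}$. The algorithm removes $z^t$ from every shard containing it. By induction $D^{t-1}_i$ includes each element of $D^{t-1}$ independently with probability $p$; deleting $z^t$ and restricting attention to $D^{t-1}\setminus\{z^t\} = D^t$ leaves each remaining element still included independently with probability $p$, which is exactly $\Sampler_i(D^t, p)$.
\item If $\bullet^t = \mathtt{'add'}$, then $D^t = D^{t-1} \cup \{z^t\}$, and the algorithm adds $z^t$ to shard $i$ with probability $p$ using fresh randomness from $\Sampler_i(\{z^t\}, p)$, independently of the inclusion of all other elements. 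Combined with the inductive hypothesis on $D^{t-1}_i$, every element of $D^t$ is now independently included with probability $p$, i.e. $D^t_i \sim \Sampler_i(D^t, p)$.
\end{itemize}
(One should note the harmless subtlety about multisets and repeated points — handled exactly as in the sampler's definition — and the fact that the randomness used for the $\mathtt{'add'}$ coin is fresh and independent of the randomness determining $D^{t-1}_i$.) Third, conditioned on the contents $D^t_i$, in the update process $\theta^t_i$ is obtained by applying $\cAsingle_i$ to $D^t_i$ with fresh independent randomness precisely on those rounds where shard $i$ changed, and equals the previously-computed $\cAsingle_i$ output on the final such contents otherwise; either way $\theta^t_i$ is $\cAsingle_i(D^t_i)$ with fresh randomness independent of $D^t_i$, matching the retraining process. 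Combining steps two and three gives that $\theta^t_i$ has law $\cAsingle_i(\Sampler_i(D^t,p))$, and taking the product over $i$ gives the claim, with $\alpha = \beta = \gamma = 0$.

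\textbf{Main obstacle.} The conceptual content is light once the right claim is isolated; the main thing to be careful about is the bookkeeping in the inductive step — precisely tracking that the randomness injected by $\Sampler_i(\{z^t\},p)$ on an addition (and the ``do nothing'' on a deletion) is independent of the randomness that produced $D^{t-1}_i$, so that the product-form ``each element included i.i.d.\ with probability $p$'' invariant is genuinely preserved rather than merely matching marginals. A secondary point worth a sentence is to spell out \emph{why} nonadaptivity is essential here: if $\updreq$ could depend on $\psi^{t'}$, the sequence $u^{\le t}$ would be correlated with the shard randomness and $D^t$ would itself be random and correlated with the $D^t_i$'s, breaking the clean equality — this is exactly the gap that Theorem~\ref{thm:general-theorem} is designed to bridge.
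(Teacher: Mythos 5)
Your proposal is correct and follows essentially the same route as the paper: decompose across shards by independence, prove by induction on $t$ that conditional on the fixed update sequence each shard $D^t_i$ is distributed exactly as $\Sampler_i(D^t,p)$ (the paper's Lemma~\ref{lem:shard-dist}), and then observe that $\theta^t_i$ is $\cAsingle_i$ applied with fresh independent randomness to the final shard contents, matching retraining. Your explicit insistence on preserving the full product-form (``each element included i.i.d.\ with probability $p$'') invariant, rather than just the per-element marginals the paper's induction verifies, is a small but welcome tightening of the same argument.
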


Now, by combining Lemma \ref{lem:non-adaptive} and our general Theorem \ref{thm:general-theorem}, we can show the following:
\begin{restatable}[Unlearning Guarantees]{theorem}{thmAdaptiveDeletion}
\label{thm:adaptive-deletion}
If for every round $t$, the sequence of publishing functions  $\{ \publish^{t'} \}_{t' \le t}$ is $(\epsilon,\delta)$-differentially private in the random seeds $r \sim \cP^k$ of the algorithms for $0 < \epsilon \le 1/2$ and $0 < \delta < \epsilon$, then  $\cR_{\cAdistr}$ is an $(\alpha, \beta, \gamma)$-unlearning algorithm for $\cAdistr$ where
\[
\alpha = O\left(\epsilon^2 k + k \sqrt{\delta / \epsilon}\right), \quad
\beta = \gamma = O\left(\sqrt{e^{-\epsilon^2 k} + k\sqrt{\delta / \epsilon}}\right)
\]
\end{restatable}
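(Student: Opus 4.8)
The plan is to obtain Theorem~\ref{thm:adaptive-deletion} as a direct instantiation of the general reduction of Theorem~\ref{thm:general-theorem}, with Lemma~\ref{lem:non-adaptive} supplying the non-adaptive hypothesis. Theorem~\ref{thm:general-theorem} needs three inputs: (i) a non-adaptive $(\alpha,\beta,\gamma)$-unlearning guarantee for $\cR_{\cAdistr}$, which Lemma~\ref{lem:non-adaptive} gives with $\alpha=\beta=\gamma=0$; (ii) that for every round $t$ the publishing functions $\{\publish^{t'}\}_{t'\le t}$ are $(\epsilon,\delta)$-differentially private in the internal randomness $r$ with $0<\epsilon\le 1/2$, $0<\delta<\epsilon$, which is exactly the hypothesis of Theorem~\ref{thm:adaptive-deletion}; and (iii) that this internal randomness has the i.i.d.\ product form $r\sim\cP^m$ assumed in Section~\ref{sec:reduction}, where the value of $m$ drives the final parameters. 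The only non-mechanical step is to verify that the randomness of $(\cAdistr,\cR_{\cAdistr})$ decomposes into $k$ i.i.d.\ coordinates, so that we may take $m=k$.

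To see this, I would first observe that, by construction, shard $i$ is only ever touched through its own sampler $\Sampler_i$ and its own single-shard trainer $\cAsingle$: the initial shard $D^0_i$, the initial model $\theta^0_i$, every subsequent shard update $D^t_i$, and every retraining $\theta^t_i=\cAsingle(D^t_i)$ depend on no randomness other than that consumed inside shard $i$ (including, for addition requests, the coin flips of $\Sampler_i$ on the new point). Bundle all random bits ever used by shard $i$ over the entire update stream into a single random variable $r_i$, and let $\cP$ be its distribution. Since Algorithm~\ref{alg:learning-distributed0} draws the $k$ shards independently, $r=(r_1,\dots,r_k)\sim\cP^k$, so $m=k$. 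Conditioned on $r$, Algorithms~\ref{alg:learning-distributed0} and~\ref{alg:unl-distributed} are deterministic, and the state $s^t=(\{D^t_i\}_i,\{\theta^t_i\}_i)$ is a deterministic function $g^t(D^0,u^{\le t},r)$, matching exactly the abstraction of Section~\ref{sec:reduction}.

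With $m=k$ in hand, the conclusion follows by plugging $\alpha=\beta=\gamma=0$ into Theorem~\ref{thm:general-theorem}: one gets $\alpha'=0+\epsilon'=O(\epsilon^2 k+k\sqrt{\delta/\epsilon})$, while $\beta'=0\cdot e^{\epsilon'}+\sqrt{\delta'}=\sqrt{\delta'}$ and $\gamma'=0+\sqrt{\delta'}$, where $\delta'=e^{-\epsilon^2 k}+O(k\sqrt{\delta/\epsilon})$, hence $\beta'=\gamma'=O\left(\sqrt{e^{-\epsilon^2 k}+k\sqrt{\delta/\epsilon}}\right)$, which is the claimed statement. The only place a reader should slow down is the decomposition $r\sim\cP^k$ of the previous paragraph: this is precisely where the ``one crucial modification'' to SISA (sampling the shards \emph{independently} via separate $\Sampler_i$, rather than partitioning a fixed dataset) is used, since it is what makes the per-shard randomness an honest product distribution and lets us take $m=k$ instead of a single correlated blob of randomness. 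Everything else is bookkeeping inherited from Theorems~\ref{thm:general-theorem} and~\ref{thm:DPmaxinfo}.
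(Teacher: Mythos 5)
Your proposal is correct and follows essentially the same route as the paper's own proof: invoke Lemma~\ref{lem:non-adaptive} for the $(0,0,0)$ non-adaptive guarantee, observe that the per-shard randomness factors as $r=(r_1,\dots,r_k)\sim\cP^k$ with the state a deterministic function $g^{t-1}(D^0,u^{\le t-1},r)$, and then apply Theorem~\ref{thm:general-theorem} with $m=k$. Your explicit identification of the independent-sampler modification as the step that licenses the product-form $\cP^k$ is exactly the point the paper's proof is making, so nothing further is needed.
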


Next, we bound the time complexity of our algorithms:

\begin{restatable}[Run-time Guarantees]{theorem}{thmRunTime}
\label{thm:run-time}
Let $p = 1/k$. Suppose the publishing functions satisfy the differential privacy requirement of Theorem~\ref{thm:adaptive-deletion}. Let $N^t$ denote the number of times $\cR_\cAdistr$ calls $\cAsingle$ at round $t$. We have that $N^0 = k$, and for every round $t \ge 1$: 1) if the update requester is non-adaptive, for every $\xi$, with probability at least $1-\xi$, $N^t \le 1 + \sqrt{2 \log \left( 1 / \xi \right)}$. 2) if the update requester is adaptive, for every $\xi$, with probability at least $1-\xi$, $N^t \le 1 + \sqrt{2 \log \left( (n+t) / \xi \right)}$. Furthermore, for $\xi > \delta'$, with probability at least $1-\xi$, we have
\[
N^t \le 1 + \min \left\{ \sqrt{2 \log \left( 2(n+t) / (\xi - \delta') \right)}, \sqrt{2 \epsilon' + 2\log \left( 2 / (\xi - \delta') \right)} \right\}
\]
where $\epsilon' = O\left(\epsilon^2 k + k \sqrt{\delta / \epsilon}\right)$ and $\delta' =  e^{-\epsilon^2 k} + O\left( k\sqrt{ \delta / \epsilon}\right)$
\end{restatable}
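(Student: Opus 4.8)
The plan is to reduce every claim to Binomial tail bounds, using the non-adaptive argument directly wherever adaptivity is harmless and invoking the max-information machinery of Section~\ref{sec:reduction} only where it is not. The claim $N^0 = k$ is immediate, since $\cAdistr$ (Algorithm~\ref{alg:learning-distributed0}) calls $\cAsingle$ once per shard. For $t \ge 1$, $N^t = |S|$ is the number of shards touched by $u^t = (z^t, \bullet^t)$. The one combinatorial fact I will reuse throughout is that, with $p = 1/k$, the number of shards a \emph{fixed} point is sampled into is distributed as $\mathrm{Bin}(k,1/k)$, which has mean $1$, so a Chernoff bound gives $\Pr[\,\mathrm{Bin}(k,1/k)\ge 1+\lambda\,]\le e^{-\lambda^2/2}$. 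Two of the claimed bounds then follow at once. For an addition, $S$ is determined by the fresh coins of $\Sampler_i(\{z^t\},p)$, which are used for the first time at round $t$ and hence are independent of the history the update requester has seen; conditioned on $z^t$, $N^t\sim\mathrm{Bin}(k,1/k)$ exactly, so $\Pr[N^t\ge 1+\lambda]\le e^{-\lambda^2/2}$ \emph{whether or not} the requester is adaptive. For a deletion with a non-adaptive requester, $z^t$ is fixed in advance, and the shards it triggers are a subset of those it was placed in at its last entry (deletions only shrink shard membership), so again $N^t$ is stochastically dominated by $\mathrm{Bin}(k,1/k)$. Both cases give $N^t\le 1+\sqrt{2\log(1/\xi)}$ with probability $\ge 1-\xi$, so adaptive deletions are the only case needing more work.

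For adaptive deletions, the first bound comes from a union bound over candidate targets. Although $z^t$ is chosen adaptively, it must lie in $D^{t-1}$, whose contents as a multiset (ignoring shard assignments) are a deterministic function of $u^{\le t}$ alone, and $|D^{t-1}|\le n+(t-1)$; hence $z^t$ is one of at most $n+t$ fixed candidate points --- the $n$ points of $D^0$ together with the $\le t-1$ points added in rounds $1,\dots,t-1$. For each candidate $z$, the number of shards $z$ would trigger is at most the number it was placed in at its last entry, which is $\mathrm{Bin}(k,1/k)$-distributed (unconditionally if $z\in D^0$, and conditioned on $z$ itself if $z$ was added, since the sampler coins for that round are fresh), so $\Pr[\,z\text{ triggers}\ge 1+\lambda\text{ shards and }z^t=z\,]\le\Pr[\,\mathrm{Bin}(k,1/k)\ge 1+\lambda\,]\le e^{-\lambda^2/2}$ just because $\Pr[A\cap B]\le\Pr[A]$. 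Summing over the $\le n+t$ candidates gives $\Pr[N^t\ge 1+\lambda]\le(n+t)e^{-\lambda^2/2}$ and hence $N^t\le 1+\sqrt{2\log((n+t)/\xi)}$ with probability $\ge 1-\xi$.

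For the ``furthermore'' bound I would additionally run the Section~\ref{sec:reduction} reduction quantitatively. First observe that $N^t$ is a deterministic function of $(r,u^{\le t})$: $r$ fixes all shard memberships, and $u^{\le t}$ fixes $z^t$ and the abstract dataset. By hypothesis the publishing functions $\{\publish^{t'}\}_{t'\le t}$ are $(\epsilon,\delta)$-differentially private in $r\sim\cP^k$, and $u^{\le t}$ is an $r$-independent (randomized) post-processing of $\psi^{\le t-1}$, so by Lemma~\ref{lem:postprocessing} (exactly as in the proof of Theorem~\ref{thm:general-theorem}) the induced map $r\mapsto u^{\le t}$ is $(\epsilon,\delta)$-DP; Theorem~\ref{thm:DPmaxinfo} with $m=k$ then gives $I_\infty^{\delta'}(r;u^{\le t})\le\epsilon'$ for the stated $\epsilon'$ and $\delta'$. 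Applying the max-information inequality (Definition~\ref{def:max-info}) to the event $\{N^t\ge 1+\lambda\}$, viewed as a subset of $(r,u^{\le t})$-space, yields $\Pr[N^t\ge 1+\lambda]\le e^{\epsilon'}\,\Pr[(r'\otimes u^{\le t})\in\{N^t\ge 1+\lambda\}]+\delta'$, where $r'$ is a fresh independent draw from $\cP^k$. Under this product law, conditioning on $u^{\le t}$ (hence on $z^t$ and on $D^{t-1}$ as a multiset), the fresh memberships again make $N^t$ stochastically dominated by $\mathrm{Bin}(k,1/k)$, so the product-measure probability is $\le e^{-\lambda^2/2}$ and $\Pr[N^t\ge 1+\lambda]\le e^{\epsilon'}e^{-\lambda^2/2}+\delta'$. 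To state both this and the union-bound estimate simultaneously as a minimum, I would allot failure probability $(\xi-\delta')/2$ to the union-bound event and the remaining $(\xi+\delta')/2$ to the max-information event; solving $(n+t)e^{-\lambda^2/2}\le(\xi-\delta')/2$ and $e^{\epsilon'}e^{-\lambda^2/2}+\delta'\le(\xi+\delta')/2$ reproduces exactly the two terms $\sqrt{2\log(2(n+t)/(\xi-\delta'))}$ and $\sqrt{2\epsilon'+2\log(2/(\xi-\delta'))}$.

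The routine parts here are the Chernoff bound and the candidate count. The step I expect to be the main obstacle is the max-information argument, where two points need care: (i) one must verify that $N^t$ really is measurable with respect to $(r,u^{\le t})$ alone --- this is precisely where the distributed construction helps, since the \emph{abstract} dataset $D^{t-1}$ depends only on $u^{\le t}$ while $r$ enters only through shard assignments; and (ii) one must check that after $r$ is replaced by a fresh independent copy, the conditional law of $N^t$ given $u^{\le t}$ is still $\mathrm{Bin}(k,1/k)$-dominated, which is what allows the non-adaptive Chernoff estimate to be re-used inside the transfer. Everything after that, including the allocation of the failure budget that produces the precise constants, is bookkeeping.
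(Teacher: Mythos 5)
Your proposal is correct and follows essentially the same route as the paper's proof: a $\mathrm{Bin}(k,1/k)$ Chernoff bound for additions and non-adaptive deletions, a union bound over the at most $n+t$ candidate points for the first adaptive bound, and the post-processing/max-information transfer to the product distribution for the second, with the same failure-budget split yielding the stated minimum. The only cosmetic differences are that you argue stochastic dominance directly where the paper invokes its Lemma~\ref{lem:shard-dist}, which changes nothing substantive.
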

The proof can be found in the appendix, but at a high level it proceeds as follows. For a deletion request, we must retrain every shard that contains the point to be deleted. For a non-adaptive deletion request, we retrain one shard in expectation and we can obtain a high probability upper bound by using a Hoeffding bound. In the adaptive case, this may no longer be true, but there are two ways to obtain upper bounds that correspond to the two bounds in our Theorem. We can provide a worst-case upper bound on the number of shards that \emph{any} of the $n$ data points belongs to, which incurs a cost of order $\sqrt{\log n}$. Alternately,  we can apply max-information bounds to reduce to the non-adaptive case, using an argument that is similar to our reduction for deletion guarantees.

\subsection{Private Aggregation}
We briefly describe how we serve prediction requests by privately aggregating the output of the ensemble of models such that the published predictions are differentially private in the random seeds $r$. 
At each round $t$, while $\cR_\cAdistr$ is waiting for the next update request $u^{t+1}$, we  receive prediction requests $x$ and serve predictions $\hat y$. For each prediction request, we privately aggregate the predictions made by the ensemble of models $\{\theta^t_i\}_i$; \cite{privatepred} show several ways to privately aggregate predictions (one simple technique is to use the exponential mechanism to approximate the majority vote). Suppose we aggregate the predictions made by the ensemble of models using $\ppredict^k_{\epsilon'}: \Theta^k \times \cX \to \mathcal{Y}$, which 
takes in an ensemble of $k$ models and a data point, aggregates predictions from the ensemble models, and outputs a label that is $\epsilon'$-differentially private in the models. If we receive $l^t$ many prediction requests $(x^t_1, \dots, x^t_{l^t})$ before our next update request $u^{t+1}$, we can write $(\hat{y}^t_1, \dots, \hat{y}^t_{l^t}) = \publish^t(\{\theta^t_i\}_i)$ where $\hat{y}^t_j = \ppredict^k_{\epsilon'}(\{\theta^t_i\}_i, x^t_j)$. 

 Theorem~\ref{thm:adaptive-deletion}, tells us that  desired unlearning parameters $(\alpha,\beta,\gamma)$ can be obtained by guaranteeing that the sequence of predictions is $(\epsilon,\delta)$ differentially private in the models (and hence $r$), for target parameters $\epsilon,\delta$. As we serve prediction requests using $\ppredict^k_{\epsilon'}$ our privacy loss will accumulate and eventually exhaust our budget of $(\epsilon,\delta)$-differential privacy. Hence we must track our accumulated privacy loss in the state of our unlearning algorithm, and when it is exhausted, fully retrain using $\cAdistr$. This resamples $r$ and hence resets our privacy budget.  
 Standard composition theorems (see \cite{DR14}) show that we exhaust our privacy budget (and need to fully retrain) every time  the number of prediction requests made since the last full retraining exceeds $\left\lfloor \frac{\epsilon^2}{8(\epsilon')^2 \ln(\frac{1}{\delta})} \right\rfloor$. We formally describe this process denoted as $\mathtt{PrivatePredictionInteraction}(\epsilon', \epsilon, \delta, k)$ in the appendix and state its unlearning guarantee in Theorem~\ref{thm:final}.

\begin{restatable}[]{theorem}{thmFinal}
\label{thm:final}
The models $\{\{\theta^t_i\}_i\}_t$ in $\mathtt{PrivatePredictionInteraction}(\epsilon', \epsilon, \delta, k)$ satisfy $(\alpha,\beta,\gamma)$-unlearning guarantee for $\cAdistr$ where $\alpha = O\left(\epsilon^2 k + k \sqrt{\delta / \epsilon}\right)$ and $\beta,\gamma = O\left(\sqrt{e^{-\epsilon^2 k} + k\sqrt{\delta / \epsilon}}\right)$, if $0 < \epsilon \le 1/2$ and $0 < \delta < \epsilon$.
\end{restatable}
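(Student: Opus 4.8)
The plan is to verify that $\mathtt{PrivatePredictionInteraction}(\epsilon', \epsilon, \delta, k)$ satisfies the hypotheses of Theorem~\ref{thm:adaptive-deletion} --- namely, that for every round $t$ the composed sequence of publishing functions $\{\publish^{t'}\}_{t' \le t}$ is $(\epsilon, \delta)$-differentially private in the random seeds $r \sim \cP^k$ --- and then simply invoke that theorem. Almost all the content is in establishing the differential privacy claim; once it holds, the stated $(\alpha, \beta, \gamma)$ parameters are exactly what Theorem~\ref{thm:adaptive-deletion} outputs, so there is nothing further to do.

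The key steps are as follows. First, fix a round $t$ and consider the entire transcript of published objects up through round $t$: this is the concatenation of the prediction vectors $(\hat{y}^{t'}_1, \dots, \hat{y}^{t'}_{l^{t'}})$ over all rounds $t' \le t$, where each $\hat{y}^{t'}_j = \ppredict^k_{\epsilon'}(\{\theta^{t'}_i\}_i, x^{t'}_j)$. Each individual prediction is, by assumption, $\epsilon'$-differentially private in the ensemble of models $\{\theta^{t'}_i\}_i$; and since each $\theta^{t'}_i$ is a deterministic post-processing of $r$ (the $i$-th coordinate $r_i$, together with the data and update history, via $\cAsingle$ and $\cAdistr$), Lemma~\ref{lem:postprocessing} gives that each prediction is $\epsilon'$-differentially private in $r$. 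Second, bound the total number of predictions served between full retrainings: by design $\mathtt{PrivatePredictionInteraction}$ triggers a full retraining (which resamples $r$ from $\cP^k$ and resets the budget) once the number of prediction requests since the last retraining would exceed $\lfloor \epsilon^2 / (8(\epsilon')^2 \ln(1/\delta)) \rfloor$. Third, apply advanced composition (\cite{DR14}): the composition of at most $\lfloor \epsilon^2 / (8(\epsilon')^2 \ln(1/\delta)) \rfloor$ mechanisms that are each $\epsilon'$-DP is $(\epsilon, \delta)$-DP --- this is exactly the inversion of the advanced composition bound $\epsilon \approx \sqrt{2 N \ln(1/\delta)}\,\epsilon'$ for this choice of $N$. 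Fourth, observe that predictions made before the last full retraining depend only on an independently resampled copy of $r$, so they do not contribute to the privacy loss of the current epoch's seed; hence the full transcript up to round $t$, viewed as a function of the current seed $r$, is $(\epsilon, \delta)$-DP. Finally, feed this into Theorem~\ref{thm:adaptive-deletion} with $m = k$ to conclude.

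The main obstacle I expect is the bookkeeping around full retraining: one must argue carefully that the privacy accounting "resets" correctly --- i.e., that the models produced after a full retraining are a function of a \emph{fresh}, independent draw of $r$, so that the adversary's view of the previous epoch cannot be leveraged to break privacy of the current epoch's randomness. This requires being precise about what "$(\epsilon, \delta)$-DP in $r$" means across the retraining boundary: the right framing is that within each epoch we have a fixed seed and the transcript restricted to that epoch is $(\epsilon, \delta)$-DP in it, and Theorem~\ref{thm:adaptive-deletion}'s hypothesis is applied per-epoch. A secondary subtlety is that the publishing function at round $t$ must be allowed to output a variable-length vector $(\hat y^t_1, \dots, \hat y^t_{l^t})$ and that $l^t$ itself may be adaptively chosen; but since $l^t$ is determined by the (already published) interaction history and each coordinate is independently $\epsilon'$-DP, adaptive composition still applies. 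Neither of these is deep --- they are matters of setting up the composition argument cleanly --- so the proof is essentially a careful invocation of standard DP composition followed by Theorem~\ref{thm:adaptive-deletion}.
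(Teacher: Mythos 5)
Your proposal is correct and follows essentially the same route as the paper: the paper likewise uses advanced composition (its Lemma~\ref{lem:num-pred-requests}) to show each epoch's prediction transcript is $(\epsilon,\delta)$-differentially private in the models and hence in $r$, applies Theorem~\ref{thm:adaptive-deletion} per epoch, and stitches epochs together using the fact that each full retraining draws fresh independent randomness. One minor nit: the step from DP-in-the-models to DP-in-$r$ is not an application of post-processing (Lemma~\ref{lem:postprocessing}) but rather the observation that changing one coordinate $r_i$ alters only the single model $\theta_i$, so neighboring seeds induce neighboring ensembles; the conclusion is unaffected.
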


\section{Evaluation of Unlearning Guarantees}
\label{sec:experiment}
In this section we demonstrate that the deletion guarantees of algorithms in the SISA framework \citep{unlearning} fail for adaptive deletion sequences.  In Section \ref{sec:labelonly} we give a clean toy construction which shows algorithms in the SISA framework fail to have nontrivial adaptive deletion guarantees even in the black-box setting when the models within each shard are not made public, only aggregations of their classification outputs. In the Appendix we experimentally evaluate a more realistic instantiation of this construction. In Section \ref{sec:fullmodelsetting} we consider the white-box setting in which the models in each shard are made public. SISA continues to have perfect deletion guarantees against \emph{non-adaptive} deletion sequences in this setting. Experimental results on CIFAR-10 \citep{cifar10}, MNIST \citep{mnist}, and Fashion-MNIST \citep{fmnist} show both the failure of SISA to satisfy adaptive deletion guarantees, and give evidence that differential privacy can mitigate this problem well beyond the setting of our theorems while achieving accuracy only modestly worse than SISA. The code for our experiments can be found at \url{https://github.com/ChrisWaites/adaptive-machine-unlearning}.

\subsection{Theory for the Label-Only Setting}
\label{sec:labelonly}
The first setting we consider directly corresponds to the setting in which our final algorithms operate: what is made public is the aggregate predictions of the ensemble of models, but not the models themselves. For non-adaptive sequences of deletions, distributed algorithms of the sort described in Section \ref{sec:distributed} have perfect deletion guarantees. We demonstrate via a simple example that these guarantees dramatically fail for adaptive deletion sequences.

Suppose we have a dataset consisting of real-valued points with binary labels $\{ (x_i, y_i) \}_{i=1}^{2n}$, $x_i \in \mathbb{R}^d$, $y_i \in \{0, 1\}$ in which there are exactly two copies of each distinct training example. Consider a simplistic classification model, resembling a lookup table, which given a point $x_i$ predicts the label $y_i$ if the model has been trained on ($x_i, y_i$) and a dummy prediction value "$\bot$" otherwise:
\[
f_{\mathcal{D}}(x_i)=
    \begin{cases}
    y_i &\text{if } (x_i, y_i) \in \mathcal{D},\\
    \bot &\text{otherwise}
    \end{cases}
\]

Consider what happens when the training algorithm randomly partitions this dataset into three pieces and trains such a model on each partition. This constructs an ensemble which, at query time, predicts the class with the majority vote. On this dataset, the ensemble will predict the labels of roughly $2 / 3$ of the training points correctly---that is, exactly those points for which the duplicates have fallen into distinct partitions, so that the ensemble gets the majority vote right. 

We construct an adaptive adversary who chooses to delete exactly those training points that the ensemble correctly classifies (which are those points for whom the duplicates have fallen into distinct shards). The result is that the model resulting from this deletion sequence will misclassify every remaining training point. Full retraining (because it would rerandomize the partition) would again lead to training accuracy of approximately $2/3$. Recalling that our deletion notion requires that the probability of any event under the unlearning scheme is not  much larger than the probability of the same event under full retraining, this demonstrates that there are algorithms in the SISA framework --- even if the models are not directly exposed --- that do not satisfy $(\alpha,\beta, \gamma)$-deletion guarantees for any nontrivial value of $\alpha$. We formalize this below:

\begin{restatable}[]{theorem}{thmSisaFailure}
\label{thm:sisa-failure}
There are learning and unlearning algorithms in the SISA framework $(\cA, \cR_\cA)$ such that for any $\alpha$, and any $\beta,\gamma < 1/4$, $\cR_{\cA}$ is not an $(\alpha,\beta,\gamma)$-unlearning algorithm for $\cA$.
\end{restatable}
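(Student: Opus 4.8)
The plan is to make the informal three-shard lookup-table example rigorous and run it through Definition~\ref{def:unlearning}. Fix $\alpha$ and $\beta,\gamma<1/4$. Take $\cZ=\Real^d\times\{0,1\}$; let $\cA$ be the SISA learner that assigns each input point independently and uniformly to one of $k=3$ shards, trains the lookup-table model $f_{D_i}$ on shard $i$, and at every round publishes $\psi^t=\publish^t(\theta^t)$ equal to the ensemble's majority-vote classifier (the ``label-only'' setting of Section~\ref{sec:labelonly}); let $\cR_\cA$ be the matching SISA unlearner, which on a deletion of $z$ removes $z$ from every shard containing it and retrains those shards. Here the random seed $r$ is just the shard assignment of the initial points, and both $\cA$ and $\cR_\cA$ are deterministic given $r$ and the update sequence. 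Since the definition of unlearning quantifies over all datasets, we are free to choose $D^0$ to consist of $n$ labelled examples with distinct feature vectors, each present with multiplicity two ($|D^0|=2n$), with $n$ taken large as a function of $\alpha$ as specified below.

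Next I would define the adaptive update requester (which, being part of our construction, may depend on the feature vectors of $D^0$). From $\psi^0$ it evaluates the ensemble's prediction on each of the $n$ distinct training feature vectors; call an example \emph{good} if that prediction is $\neq\bot$ and \emph{bad} otherwise. An example is good exactly when its two copies landed in two distinct shards (two lookup tables return its label, so the majority vote is the label), and bad exactly when both copies landed in one shard (two tables return $\bot$, so the majority vote is $\bot$). Writing $G$ for the set of good examples and $g=|G|$, the requester issues, over rounds $t=1,\dots,g$, one deletion request per good example (one request suffices, as a good example has a single copy in each of exactly two shards). Crucially, the whole sequence $u^{\le g}$ is a deterministic function of $r$, and it determines $G$.

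The heart of the argument is the identity
\[
\Pr\!\left[\cR_\cA(D^{g-1},u^g,s^{g-1})\in E \,\middle|\, u^{\le g}\right]=1,
\qquad
E=\{\theta\in\Theta^*:\ \theta\text{ classifies no point of }D^0\text{ correctly}\}.
\]
Indeed, \emph{every} seed $r$ consistent with a realized $u^{\le g}$ has the same good set $G$, and for any such $r$ the post-deletion ensemble misclassifies all of $D^0$ deterministically: a deleted (good) example now lies in zero shards, so all three tables answer $\bot$; a surviving (bad) example had both copies in one shard and was never touched by a deletion, so exactly one table still returns its label while the other two return $\bot$, making the majority vote $\bot$. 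Hence the conditional probability above is exactly $1$. This is where adaptivity does all the work, and the delicate point is checking that the ``everything misclassified'' conclusion holds for \emph{every} $r$ in the posterior support of $u^{\le g}$, not merely in expectation, which is exactly what lets us condition and still obtain probability $1$.

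Finally I would control the retraining baseline and assemble the bound. Given $u^{\le g}$ (equivalently $G$), the dataset $D^g=D^0\circ u^{\le g}$ consists of the $n-g$ bad examples, still doubled; for $\cA(D^g)$ to land in $E$ it must in particular misclassify all of $D^g$, which forces each of the $n-g$ surviving examples to have both of its freshly re-randomized copies in a single shard, an event of probability exactly $(1/3)^{\,n-g}$. Since $g\sim\mathrm{Bin}(n,2/3)$, a Chernoff bound gives $\Pr_r[g\le 0.7n]\ge 3/4>\gamma$ once $n$ exceeds an absolute constant; and choosing $n$ large enough that $e^{\alpha}(1/3)^{0.3n}<3/4$ (so $n=O(\alpha)$) guarantees that on the event $\{g\le 0.7n\}$ we have $e^{\alpha}\Pr[\cA(D^g)\in E]+\beta\le e^{\alpha}(1/3)^{0.3n}+\beta<3/4+1/4=1=\Pr[\cR_\cA(\cdot)\in E\mid u^{\le g}]$. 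Thus with probability at least $3/4>\gamma$ over the draw of $u^{\le g}$ there is an event $E$ violating the inequality of Definition~\ref{def:unlearning} at step $t=g$, so $\cR_\cA$ is not an $(\alpha,\beta,\gamma)$-unlearning algorithm for $\cA$. Besides the conditioning subtlety flagged above, the one thing to keep in mind throughout is that $(\cA,\cR_\cA)$ is a fixed pair while the dataset is adversarial, so the dataset size must be permitted to grow with $\alpha$.
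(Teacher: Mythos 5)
Your proposal is correct and follows essentially the same route as the paper's proof: the doubled dataset with three-shard lookup-table SISA, the adversary who deletes exactly the correctly classified (distinct-shard) points, the event of zero training accuracy with conditional probability $1$ under unlearning versus $(1/3)^{n-g}$ under retraining, a binomial tail bound on the number of good points, and $n$ chosen large as a function of $\alpha$. The only differences are cosmetic (one deletion request per good example rather than per copy, a $0.7n$ threshold with constant probability $3/4$ rather than the paper's $0.99n$ with probability $1-\gamma$, and defining $E$ over all of $D^0$ rather than over $D^t$).
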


A  proof of this theorem  can be found in the appendix.

\subsection{Experiments for the Full-Model Setting}
\label{sec:fullmodelsetting}

We train SISA with an ensemble of convolutional neural networks on several datasets of points with categorical labels. Given a new point at query time,  each model in the ensemble votes on the most likely label and aggregates their votes. The models are exposed publicly. This scheme has perfect non-adaptive deletion guarantees.

To construct an adaptive deletion sequence to falsify the hypothesis that the scheme has adaptive deletion guarantees, we exploit the observation that neural networks are often \textit{overconfident} in the correct label for points on which they have been trained. For each training point, we guess that it falls into the shard corresponding to the model that has the highest confidence for the correct label. We then delete points for which we guess that they fall into the first $k/2$ of the shards, and do not delete any others. After deleting the targeted points, we compute a test statistic: the indicator of whether the average accuracy of the models from the targeted shards is lower than the average accuracy of the models from the non-targeted shards. Under full retraining, by the symmetry of the random partition, the expectation of this test statistic is 0.5. Thus under the null hypothesis that the deletion algorithm satisfies perfect deletion guarantees, the test statistic also has expectation 0.5. Therefore, to the extent that the expectation of the indicator differs from 0.5, we falsify the null hypothesis that SISA has adaptive data deletion guarantees, and larger deviations from 0.5 falsify weaker deletion guarantees. 

We run this experiment on three datasets (CIFAR-10, MNIST, and Fashion-MNIST), and plot the results in Figure \ref{fig:exp-figs}. We then repeat the experiment by adding various amounts of noise to the gradients in the model training process to guarantee finite levels of differential privacy (though much weaker privacy guarantees than would be needed to invoke our theorems). We observe that on each dataset, modest amounts of noise are sufficient to break our attack (i.e. 95\% confidence intervals for the expectation of our indicator include $0.5$, and hence fail to falsify the null hypothesis) while still approaching the accuracy of our models trained without differential privacy. This is also plotted in Figure \ref{fig:exp-figs}. This gives evidence that differential privacy can improve deletion guarantees in the presence of adaptivity even in regimes beyond which our theory gives nontrivial guarantees.

\begin{figure}
\begin{tabular}{ccc}
  \includegraphics[width=0.31\textwidth]{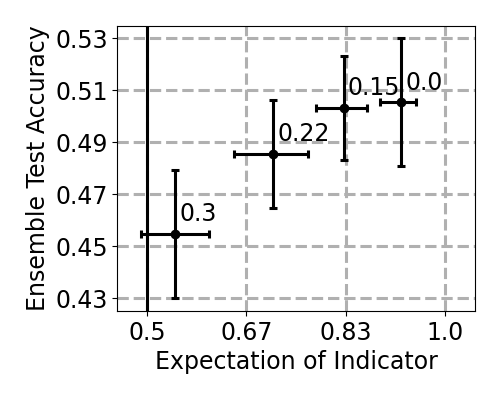} & \includegraphics[width=0.31\textwidth]{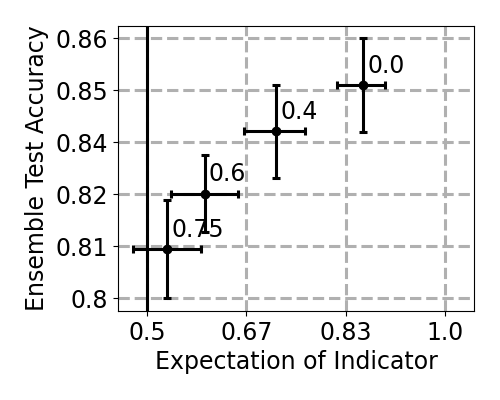} & \includegraphics[width=0.31\textwidth]{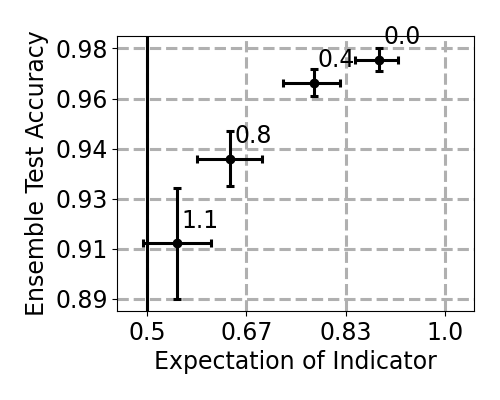} \\
  (a) $\text{CIFAR-10}_{k=6}$ & (b) $\text{Fashion-MNIST}_{k=6}$ & (c) $\text{MNIST}_{k=6}$ \\
  \includegraphics[width=0.31\textwidth]{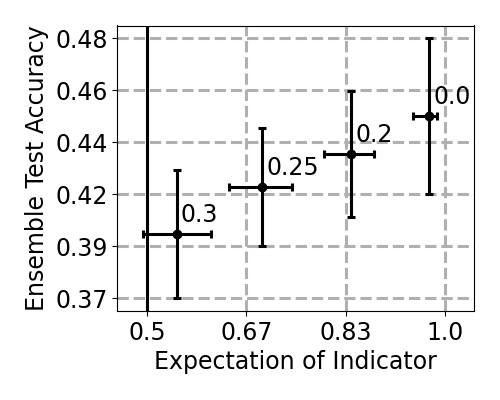} & \includegraphics[width=0.31\textwidth]{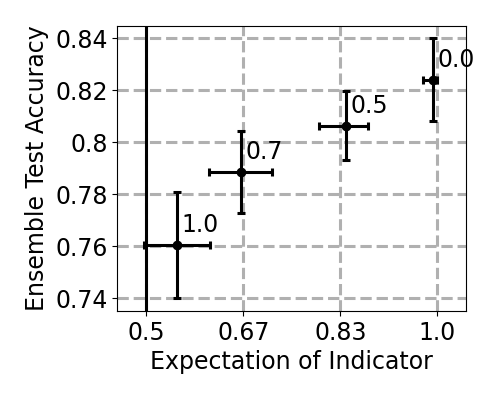} & \includegraphics[width=0.31\textwidth]{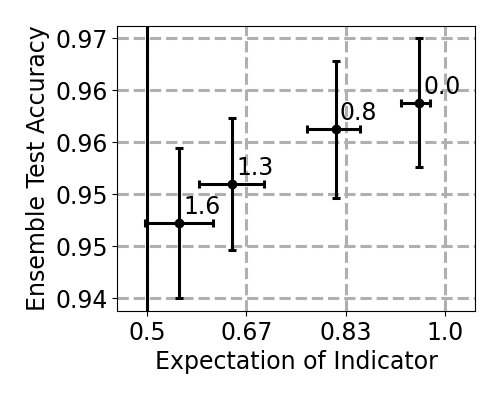} \\
  (d) $\text{CIFAR-10}_{k=2}$ & (e) $\text{Fashion-MNIST}_{k=2}$ & (f) $\text{MNIST}_{k=2}$ \\
\end{tabular}
\caption{The top row and bottom row show experiments with $k = 6$ and $k = 2$ shards respectively. The 3 columns report on 3 datasets. The $x$ axis denotes estimated expectation of our test statistic (the null hypothesis is expectation $0.5$). The $y$ axis denotes the accuracy of the ensemble after deletion. Each point is annotated with the noise multiplier used in DP-SGD, the standard deviation of Gaussian noise applied to gradients during training. A label of $0.0$ for a point represents the baseline case of no noise (original SISA algorithm). Points are affixed with 95\% confidence intervals along both axes (over the randomness of repeating the training/deletion experiment). Horizontal confidence intervals that overlap the line denoting expectation $0.5$ fail to reject the null hypothesis that the algorithm has adaptive data deletion guarantees at $p \leq 0.05$. We get to this point with a level of noise addition that results in only a modest degradation in ensemble performance compared to SISA.}
\label{fig:exp-figs}
\end{figure}

Full experimental details can be found in the appendix.

\section{Conclusion and Discussion}
We identify an important blindspot in the data deletion literature (the tenuous implicit assumption that deletion requests are independent of previously released models), and provide a very general methodology  to reduce adaptive deletion guarantees to oblivious deletion guarantees. Through this reduction we get the first model and training algorithm agnostic  methodology that allows for deletion of arbitrary sequences of adaptively chosen points while giving rigorous guarantees. The constants that our theorems inherit from the max information bounds of \cite{bounded-max-info} are such that in most realistic settings they will not give useful parameters. But we hope that these constants will be improved in future work, and we give empirical evidence that differential privacy mitigates adaptive deletion ``attacks'' at very practical levels, beyond the promises of our theoretical results. We note that like for differential privacy, the $(\alpha,\beta,\gamma)$-deletion guarantees we give in this paper are \emph{parameterized}, and are not meaningful absent a specification of those parameters. There is a risk with such technologies that they will be used with large values of the parameters that give only very weak guarantees, but will be described publicly in a way that glosses over this issue. We therefore recommend that if adopted in deployed products, deletion guarantees always be discussed in public in a way that is precise about what they promise, including the relevant parameter settings. 

\bibliographystyle{plainnat}
\bibliography{refs}

\begin{thebibliography}{30}
\providecommand{\natexlab}[1]{#1}
\providecommand{\url}[1]{\texttt{#1}}
\expandafter\ifx\csname urlstyle\endcsname\relax
  \providecommand{\doi}[1]{doi: #1}\else
  \providecommand{\doi}{doi: \begingroup \urlstyle{rm}\Url}\fi

\bibitem[Bassily et~al.(2021)Bassily, Nissim, Smith, Steinke, Stemmer, and
  Ullman]{BNSSSU16}
Raef Bassily, Kobbi Nissim, Adam Smith, Thomas Steinke, Uri Stemmer, and
  Jonathan Ullman.
\newblock Algorithmic stability for adaptive data analysis.
\newblock \emph{SIAM Journal on Computing}, \penalty0 (0):\penalty0
  STOC16--377, 2021.

\bibitem[Bourtoule et~al.(2021)Bourtoule, Chandrasekaran, Choquette-Choo, Jia,
  Travers, Zhang, Lie, and Papernot]{unlearning}
Lucas Bourtoule, Varun Chandrasekaran, Christopher Choquette-Choo, Hengrui Jia,
  Adelin Travers, Baiwu Zhang, David Lie, and Nicolas Papernot.
\newblock Machine unlearning.
\newblock In \emph{Proceedings of the 42nd IEEE Symposium on Security and
  Privacy}, San Francisco, CA., 2021.

\bibitem[Bradbury et~al.(2018)Bradbury, Frostig, Hawkins, Johnson, Leary,
  Maclaurin, Necula, Paszke, Vander{P}las, Wanderman-{M}ilne, and Zhang]{jax}
James Bradbury, Roy Frostig, Peter Hawkins, Matthew~James Johnson, Chris Leary,
  Dougal Maclaurin, George Necula, Adam Paszke, Jake Vander{P}las, Skye
  Wanderman-{M}ilne, and Qiao Zhang.
\newblock {JAX}: composable transformations of {P}ython+{N}um{P}y programs,
  2018.
\newblock URL \url{http://github.com/google/jax}.

\bibitem[Cao and Yang(2015)]{CY15}
Yinzhi Cao and Junfeng Yang.
\newblock Towards making systems forget with machine unlearning.
\newblock In \emph{2015 IEEE Symposium on Security and Privacy}, pages
  463--480. IEEE, 2015.

\bibitem[Dwork and Feldman(2018)]{privatepred}
Cynthia Dwork and Vitaly Feldman.
\newblock Privacy-preserving prediction.
\newblock \emph{CoRR}, abs/1803.10266, 2018.
\newblock URL \url{http://arxiv.org/abs/1803.10266}.

\bibitem[Dwork and Roth(2014)]{DR14}
Cynthia Dwork and Aaron Roth.
\newblock The algorithmic foundations of differential privacy.
\newblock \emph{Foundations and Trends{\textregistered} in Theoretical Computer
  Science}, 9\penalty0 (3--4):\penalty0 211--407, 2014.

\bibitem[Dwork et~al.(2006{\natexlab{a}})Dwork, Kenthapadi, McSherry, Mironov,
  and Naor]{DKMMN06}
Cynthia Dwork, Krishnaram Kenthapadi, Frank McSherry, Ilya Mironov, and Moni
  Naor.
\newblock Our data, ourselves: Privacy via distributed noise generation.
\newblock In \emph{Annual International Conference on the Theory and
  Applications of Cryptographic Techniques}, pages 486--503. Springer,
  2006{\natexlab{a}}.

\bibitem[Dwork et~al.(2006{\natexlab{b}})Dwork, McSherry, Nissim, and
  Smith]{DMNS06}
Cynthia Dwork, Frank McSherry, Kobbi Nissim, and Adam Smith.
\newblock Calibrating noise to sensitivity in private data analysis.
\newblock In \emph{Theory of cryptography conference}, pages 265--284.
  Springer, 2006{\natexlab{b}}.

\bibitem[Dwork et~al.(2015{\natexlab{a}})Dwork, Feldman, Hardt, Pitassi,
  Reingold, and Roth]{DFHPRRscience}
Cynthia Dwork, Vitaly Feldman, Moritz Hardt, Toniann Pitassi, Omer Reingold,
  and Aaron Roth.
\newblock The reusable holdout: Preserving validity in adaptive data analysis.
\newblock \emph{Science}, 349\penalty0 (6248):\penalty0 636--638,
  2015{\natexlab{a}}.

\bibitem[Dwork et~al.(2015{\natexlab{b}})Dwork, Feldman, Hardt, Pitassi,
  Reingold, and Roth]{dwork2015adaptive}
Cynthia Dwork, Vitaly Feldman, Moritz Hardt, Toniann Pitassi, Omer Reingold,
  and Aaron Roth.
\newblock Generalization in adaptive data analysis and holdout reuse.
\newblock In \emph{Proceedings of the 28th International Conference on Neural
  Information Processing Systems-Volume 2}, pages 2350--2358,
  2015{\natexlab{b}}.

\bibitem[Dwork et~al.(2015{\natexlab{c}})Dwork, Feldman, Hardt, Pitassi,
  Reingold, and Roth]{DFHPRRstoc}
Cynthia Dwork, Vitaly Feldman, Moritz Hardt, Toniann Pitassi, Omer Reingold,
  and Aaron~Leon Roth.
\newblock Preserving statistical validity in adaptive data analysis.
\newblock In \emph{Proceedings of the forty-seventh annual ACM symposium on
  Theory of computing}, pages 117--126, 2015{\natexlab{c}}.

\bibitem[Fredrikson et~al.(2015)Fredrikson, Jha, and Ristenpart]{inversion}
Matt Fredrikson, Somesh Jha, and Thomas Ristenpart.
\newblock Model inversion attacks that exploit confidence information and basic
  countermeasures.
\newblock In Indrajit Ray, Ninghui Li, and Christopher Kruegel, editors,
  \emph{Proceedings of the 22nd {ACM} {SIGSAC} Conference on Computer and
  Communications Security, Denver, CO, USA, October 12-16, 2015}, pages
  1322--1333. {ACM}, 2015.
\newblock \doi{10.1145/2810103.2813677}.
\newblock URL \url{https://doi.org/10.1145/2810103.2813677}.

\bibitem[Ginart et~al.(2019)Ginart, Guan, Valiant, and Zou]{forgetu}
Antonio Ginart, Melody~Y. Guan, Gregory Valiant, and James Zou.
\newblock Making {AI} forget you: Data deletion in machine learning.
\newblock \emph{CoRR}, abs/1907.05012, 2019.
\newblock URL \url{http://arxiv.org/abs/1907.05012}.

\bibitem[Golatkar et~al.(2020{\natexlab{a}})Golatkar, Achille, Ravichandran,
  Polito, and Soatto]{Golatkar}
Aditya Golatkar, Alessandro Achille, Avinash Ravichandran, Marzia Polito, and
  Stefano Soatto.
\newblock Mixed-privacy forgetting in deep networks.
\newblock \emph{arXiv preprint arXiv:2012.13431}, 2020{\natexlab{a}}.

\bibitem[Golatkar et~al.(2020{\natexlab{b}})Golatkar, Achille, and
  Soatto]{golatkar2020forgetting}
Aditya Golatkar, Alessandro Achille, and Stefano Soatto.
\newblock Forgetting outside the box: Scrubbing deep networks of information
  accessible from input-output observations.
\newblock In \emph{European Conference on Computer Vision}, pages 383--398.
  Springer, 2020{\natexlab{b}}.

\bibitem[Guo et~al.(2019)Guo, Goldstein, Hannun, and van~der Maaten]{hessian}
Chuan Guo, Tom Goldstein, Awni Hannun, and Laurens van~der Maaten.
\newblock Certified data removal from machine learning models.
\newblock \emph{arXiv preprint arXiv:1911.03030}, 2019.

\bibitem[Hassidim et~al.(2020)Hassidim, Kaplan, Mansour, Matias, and
  Stemmer]{streaming}
Avinatan Hassidim, Haim Kaplan, Yishay Mansour, Yossi Matias, and Uri Stemmer.
\newblock Adversarially robust streaming algorithms via differential privacy.
\newblock In Hugo Larochelle, Marc'Aurelio Ranzato, Raia Hadsell,
  Maria{-}Florina Balcan, and Hsuan{-}Tien Lin, editors, \emph{Advances in
  Neural Information Processing Systems 33: Annual Conference on Neural
  Information Processing Systems 2020, NeurIPS 2020, December 6-12, 2020,
  virtual}, 2020.
\newblock URL
  \url{https://proceedings.neurips.cc/paper/2020/hash/0172d289da48c48de8c5ebf3de9f7ee1-Abstract.html}.

\bibitem[ICO(2020)]{ico}
The U.K. Information Commissioner's~Office ICO.
\newblock Guidance on the ai auditing framework.
\newblock Draft Consultation, 2020.
\newblock URL
  \url{https://ico.org.uk/media/about-the-ico/consultations/2617219/guidance-on-the-ai-auditing-framework-draft-for-consultation.pdf}.

\bibitem[Jung et~al.(2020)Jung, Ligett, Neel, Roth, Sharifi-Malvajerdi, and
  Shenfeld]{JLNRSSS20}
Christopher Jung, Katrina Ligett, Seth Neel, Aaron Roth, Saeed
  Sharifi-Malvajerdi, and Moshe Shenfeld.
\newblock A new analysis of differential privacy’s generalization guarantees.
\newblock In \emph{11th Innovations in Theoretical Computer Science Conference
  (ITCS 2020)}, volume 151, page~31. Schloss Dagstuhl--Leibniz-Zentrum fuer
  Informatik, 2020.

\bibitem[Krizhevsky and Hinton(2009)]{cifar10}
Alex Krizhevsky and Geoffrey Hinton.
\newblock Learning multiple layers of features from tiny images.
\newblock 2009.

\bibitem[Lecun et~al.(1998)Lecun, Bottou, Bengio, and Haffner]{mnist}
Yann Lecun, Léon Bottou, Yoshua Bengio, and Patrick Haffner.
\newblock Gradient-based learning applied to document recognition.
\newblock In \emph{Proceedings of the IEEE}, pages 2278--2324, 1998.

\bibitem[Neel and Roth(2018)]{NR18}
Seth Neel and Aaron Roth.
\newblock Mitigating bias in adaptive data gathering via differential privacy.
\newblock In \emph{International Conference on Machine Learning}, pages
  3720--3729. PMLR, 2018.

\bibitem[Neel et~al.(2021)Neel, Roth, and Sharifi-Malvajerdi]{neel2021descent}
Seth Neel, Aaron Roth, and Saeed Sharifi-Malvajerdi.
\newblock Descent-to-delete: Gradient-based methods for machine unlearning.
\newblock In \emph{Algorithmic Learning Theory}, pages 931--962. PMLR, 2021.

\bibitem[Papernot et~al.(2018)Papernot, Song, Mironov, Raghunathan, Talwar, and
  Úlfar Erlingsson]{papernot2018scalable}
Nicolas Papernot, Shuang Song, Ilya Mironov, Ananth Raghunathan, Kunal Talwar,
  and Úlfar Erlingsson.
\newblock Scalable private learning with pate, 2018.

\bibitem[Papernot et~al.(2021)Papernot, Thakurta, Song, Chien, and
  Erlingsson]{temperedsigmoid}
Nicolas Papernot, Abhradeep Thakurta, Shuang Song, Steve Chien, and Ulfar
  Erlingsson.
\newblock Tempered sigmoid activations for deep learning with differential
  privacy.
\newblock \emph{The 35th AAAI Conference on Artificial Intelligence}, 2021.

\bibitem[Rogers et~al.(2016)Rogers, Roth, Smith, and Thakkar]{bounded-max-info}
Ryan Rogers, Aaron Roth, Adam Smith, and Om~Thakkar.
\newblock Max-information, differential privacy, and post-selection hypothesis
  testing.
\newblock In \emph{2016 IEEE 57th Annual Symposium on Foundations of Computer
  Science (FOCS)}, pages 487--494. IEEE, 2016.

\bibitem[Sekhari et~al.(2021)Sekhari, Acharya, Kamath, and
  Suresh]{sekhari2021remember}
Ayush Sekhari, Jayadev Acharya, Gautam Kamath, and Ananda~Theertha Suresh.
\newblock Remember what you want to forget: Algorithms for machine unlearning.
\newblock \emph{arXiv preprint arXiv:2103.03279}, 2021.

\bibitem[Shokri et~al.(2017)Shokri, Stronati, Song, and Shmatikov]{attack}
Reza Shokri, Marco Stronati, Congzheng Song, and Vitaly Shmatikov.
\newblock Membership inference attacks against machine learning models.
\newblock In \emph{2017 IEEE Symposium on Security and Privacy (SP)}, pages
  3--18. IEEE, 2017.

\bibitem[Veale et~al.(2018)Veale, Binns, and Edwards]{veale}
Michael Veale, Reuben Binns, and Lilian Edwards.
\newblock Algorithms that remember: Model inversion attacks and data protection
  law.
\newblock \emph{CoRR}, abs/1807.04644, 2018.
\newblock URL \url{http://arxiv.org/abs/1807.04644}.

\bibitem[Xiao et~al.(2017)Xiao, Rasul, and Vollgraf]{fmnist}
Han Xiao, Kashif Rasul, and Roland Vollgraf.
\newblock Fashion-mnist: a novel image dataset for benchmarking machine
  learning algorithms.
\newblock 2017.

\end{thebibliography}

\newpage

\appendix

\section{Proof of Theorem~\ref{thm:general-theorem}}
We first state the following Lemma which we will use to prove Theorem~\ref{thm:general-theorem}. 

\begin{lemma} [\citep{bounded-max-info}]\label{lem:maxinf}
Let $M: \cX^m \rightarrow \cO$ be an $(\epsilon, \delta)$-differentially private algorithm for $0 < \epsilon \le 1/2$ and $0 < \delta < \epsilon$. Then,
\[
\Pr_{(x,m') \sim (X, M(X))} \left[ \log \left( \frac{\Pr \left[ X = x, M(X) = m' \right]}{\Pr \left[ X = x \right] \Pr \left[ M(X) = m' \right]} \right) \ge k \right] \le \beta
\]
where the probability is taken with respect to the joint distribution of $X$ and $M(X)$, and
\[
k = O\left(\epsilon^2 m + m \sqrt{\frac{\delta}{\epsilon}}\right), \quad \beta = e^{-\epsilon^2m} + O\left(m\sqrt{\frac{\delta}{\epsilon}}\right)
\]
\end{lemma}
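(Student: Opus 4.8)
The plan is to control the \emph{information density}
\[
\iota(x,o) \;=\; \log \frac{\Pr[X=x,\, M(X)=o]}{\Pr[X=x]\,\Pr[M(X)=o]} \;=\; \log \frac{\Pr[M(x)=o]}{\Pr[M(X)=o]},
\]
where the second equality holds because, conditioned on $X=x$, the only remaining randomness is the internal coins of $M$, so the joint law factors as $\Pr[X=x]\cdot\Pr[M(x)=o]$. Here $X\sim\cP^m$ for an arbitrary product distribution $\cP^m$, and proving $\Pr_{(x,o)}[\iota(x,o)\ge k]\le\beta$ for every such $\cP$ is exactly the assertion that the $\beta$-approximate max-information of $M$ is at most $k$, so this Lemma is the pointwise form of Theorem~\ref{thm:DPmaxinfo}.

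First I would settle the pure case $\delta=0$. The key identity is the change of measure $d(\mathrm{joint})=e^{\iota}\,d(\mathrm{product})$, which gives $\E_{\mathrm{joint}}[e^{\lambda\iota}]=\E_{\mathrm{product}}[e^{(1+\lambda)\iota}]$ for $\lambda>0$. Under the product distribution $X$ and $O$ are independent, and for each fixed $o$ the map $f_o(x)=\iota(x,o)$ (i) has mean at most $0$ over $X\sim\cP^m$ by Jensen, since $\E_X[\Pr[M(X)=o]/\Pr[O=o]]=1$, and (ii) has bounded differences, each coordinate contributing at most $\epsilon$, because $(\epsilon,0)$-differential privacy forces $\Pr[M(\cdot)=o]$ to move by a factor at most $e^{\pm\epsilon}$ when one coordinate is changed. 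McDiarmid's subgaussian moment bound then yields $\E_X[e^{s f_o(X)}]\le e^{s^2 m\epsilon^2/8}$ for $s>0$, hence $\E_{\mathrm{product}}[e^{(1+\lambda)\iota}]\le e^{(1+\lambda)^2 m\epsilon^2/8}$. A Chernoff bound $\Pr_{\mathrm{joint}}[\iota\ge k]\le e^{-\lambda k}\,\E_{\mathrm{product}}[e^{(1+\lambda)\iota}]$, optimized over $\lambda$, gives $k=O(\epsilon^2 m+\epsilon\sqrt{m\log(1/\beta)})$, which for the target $\beta=e^{-\epsilon^2 m}$ collapses to $k=O(\epsilon^2 m)$, matching the stated bound when $\delta=0$.

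The hard part is the approximate case $\delta>0$, and I expect it to be the main obstacle. When $\delta>0$ the per-coordinate privacy loss is no longer uniformly bounded: there is a $\delta$-mass event on which $\Pr[M(x)=o]/\Pr[M(x')=o]$ can be arbitrarily large, which destroys the bounded-differences property driving the pure-DP concentration bound. Following \cite{bounded-max-info}, the plan is to decompose each neighboring comparison into a ``good'' component that obeys a pure-DP-type bound with parameter $O(\epsilon)$ and a ``bad'' component carrying $O(\delta)$ probability mass, run the moment-generating-function argument above on the truncated information density built from the good components, and then account for the accumulated bad mass across all $m$ coordinates by a group-privacy/union-type estimate. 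The discarded mass is what contributes the additive $O(m\sqrt{\delta/\epsilon})$ term in $\beta$.

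The quantitative subtlety --- and the source of the $\sqrt{\delta/\epsilon}$ rather than $\delta$ scaling --- is the choice of truncation threshold at each coordinate: raising the threshold shrinks the discarded probability but inflates the subgaussian variance proxy of the retained loss, and balancing these two effects over the $m$ coordinates is what produces the term $O(m\sqrt{\delta/\epsilon})$ inside $k$. Carrying the resulting constants honestly through this optimization (the explicit form appears in the refined statement recorded in the appendix) is the most delicate bookkeeping in the argument; everything else reduces to the pure-DP concentration of the first step.
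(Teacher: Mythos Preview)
The paper does not prove this lemma; it is imported verbatim from \cite{bounded-max-info} and used as a black box in the proof of Theorem~\ref{thm:general-theorem}. There is therefore no in-paper argument to compare against.

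That said, your sketch is a faithful high-level outline of the proof in \cite{bounded-max-info}: the pure-DP case is handled exactly by the bounded-differences/McDiarmid subgaussian argument on the log-likelihood ratio combined with the change-of-measure identity $\E_{\mathrm{joint}}[e^{\lambda\iota}]=\E_{\mathrm{product}}[e^{(1+\lambda)\iota}]$, and the approximate case proceeds by truncating the per-coordinate privacy loss, running the pure-DP argument on the truncated part, and absorbing the residual $\delta$-mass into the additive terms, with the $\sqrt{\delta/\epsilon}$ scaling arising from balancing the truncation threshold. One small caveat on your pure-DP step: McDiarmid controls $\E[e^{s(f_o-\E f_o)}]$, not $\E[e^{s f_o}]$ directly, so you are implicitly using $\E_X[f_o(X)]\le 0$ (which you noted) together with $s>0$ to pass from one to the other; this is fine but worth making explicit since the sign matters.
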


\thmGeneralTheorem*
\begin{proof}
Fix a data set $D$ and an update requester $\updreq$. Fix any unlearning step $t \ge 1$. Note that the sequence of updates up to round $t$, i.e. $u^{\le t} = (u^1, \ldots, u^t)$, can be seen as a post-processing of the sequence of published objects up to round $t-1$, i.e. $\psi^{\le t-1} = (\psi^0, \ldots, \psi^{t-1})$, where the post-processing function is defined by $\updreq$ (see Definition~\ref{def:updreq}). But we know that  $\{ \publish^{t'} \}_{t' \le t-1}$ that generates $\psi^{\le t-1}$ is $(\epsilon,\delta)$-differentially private in $r$. Hence, given that post-processing preserves differential privacy (Lemma~\ref{lem:postprocessing}), we have that $u^{\le t}$ is also $(\epsilon,\delta)$-differentially private in $r$.
Consequently, we can apply the fact that DP implies bounded max-information (Lemma~\ref{lem:maxinf}) to get that
\begin{equation}\label{eq:max-bound}
\Pr_{(r,u^{\le t})} \left[  \log \frac{\Pr \left[ r \vert u^{\le t} \right]}{\Pr \left[ r \right] }  \ge  \epsilon' \right] = \Pr_{(r,u^{\le t})} \left[  \log \frac{\Pr \left[ r, u^{\le t} \right]}{\Pr \left[ r \right] \Pr \left[ u^{\le t} \right]}  \ge  \epsilon' \right]  \le \delta'
\end{equation}
where the probability is taken with respect to the joint distribution of $(r,u^{\le t})$, and that
\[
\epsilon' \triangleq O\left(\epsilon^2 m + m \sqrt{\frac{\delta}{\epsilon}}\right), \quad \delta' \triangleq  e^{-\epsilon^2 m} + O\left( m\sqrt{\frac{\delta}{\epsilon}}\right)
\]
Now define the ``Good" event for the update sequence $u^{\le t}$:
\[
G = \left\{ u^{\le t} : \Pr_{r \vert u^{\le t}} \left[  \log \frac{\Pr \left[ r \vert u^{\le t} \right]}{\Pr \left[ r \right] }  \ge  \epsilon'\right] \le \sqrt{\delta'} \right\}
\]
We have that
\begin{align*}
\Pr_{u^{\le t}} \left[ u^{\le t} \notin G \right] &= \Pr_{u^{\le t}} \left[ \Pr_{r \vert u^{\le t}} \left[ \log \frac{\Pr \left[ r \middle\vert u^{\le t} \right]}{\Pr \left[ r \right] }  \ge  \epsilon' \right] > \sqrt{\delta'} \right] \\
&\le \frac{\E_{u^{\le t}} \left[ \Pr_{r \vert u^{\le t}} \left[  \log \frac{\Pr \left[ r \vert u^{\le t} \right]}{\Pr \left[ r \right] }  \ge  \epsilon' \right] \right]}{\sqrt{\delta'}} \\
&= \frac{\Pr_{(r,u^{\le t})} \left[  \log \frac{\Pr \left[ r \vert u^{\le t} \right]}{\Pr \left[ r \right] }  \ge  \epsilon' \right]}{\sqrt{\delta'}} \\
&\le \sqrt{\delta'}
\end{align*}
where the first inequality is an application of Markov's inequality, and the last one follows from Equation~\eqref{eq:max-bound}. Therefore, if we condition on $\{ u^{\le t} \in G \}$ which happens with probability at least $1 - \sqrt{\delta'}$, we have the following guarantee.
\[
\Pr_{r \vert u^{\le t}} \left[  \log \frac{\Pr \left[ r \vert u^{\le t} \right]}{\Pr \left[ r \right] }  \ge  \epsilon' \right] \le \sqrt{\delta'}
\]
which in turn implies, with probability $1 - \sqrt{\delta'}$ over the draw of $u^{\le t}$, that for every event $F$ in the space of random seeds ($r$),
\begin{equation}\label{eq:blah}
\Pr \left[ r \in F \, \vert \, u^{\le t}\right] \le e^{\epsilon'} \Pr \left[ r \in F \right] + \sqrt{\delta'}
\end{equation}
Now we condition on $\{ u^{\le t} \in G \}$. Fix any event $E \subseteq \Theta^*$ in the space of models, and let $F = \{ r: \cR_\cA (D^{t-1}, u^{t} , s^{t-1}) \in E\}$ be the event that the output models of the unlearning algorithm on round $t$ belongs to $E$, recalling that $s^{t-1} = g^{t-1} (D^0, u^{\le t-1}, r)$. Substituting $F$ in Equation~\eqref{eq:blah}, we get that
\begin{equation}\label{eq:1}
\Pr \left[ \cR_\cA (D^{t-1}, u^{t} , s^{t-1}) \in E \, \vert \, u^{\le t}\right] \le e^{\epsilon'} \Pr \left[ \cR_\cA (D^{t-1}, u^{t} , s^{t-1}) \in E \right] + \sqrt{\delta'}
\end{equation}
Note that because on the right hand side we do not condition the probability on the update sequence, we are taking the probability over the distribution of output models of round $t$ for a nonadaptively chosen update sequence. Therefore by the unlearning guarantees for nonadaptive update requesters, we have that with probability at least $1-\gamma$ over the draw of $u^{\le t}$,
\begin{equation}\label{eq:2}
\Pr \left[ \cR_\cA (D^{t-1}, u^{t} , s^{t-1}) \in E \right] \le e^\alpha \Pr \left[ \cA (D^t) \in E \right] + \beta
\end{equation}
Now we can combine Equations~\eqref{eq:1} and \eqref{eq:2} to conclude that, with probability $1-\gamma - \sqrt{\delta'}$ over $u^{\le t}$,
\[
\Pr \left[ \cR_\cA (D^{t-1}, u^{t} , s^{t-1}) \in E \, \vert \, u^{\le t}\right] \le e^{\alpha + \epsilon'} \Pr \left[ \cA (D^t) \in E \right] + \beta e^{\epsilon'} + \sqrt{\delta'}
\]
completing the proof.
\end{proof}

\section{Missing Details from Section \ref{sec:distributed}}

\begin{lemma}
\label{lem:shard-dist}
Consider the distributed learning and unlearning algorithms $\cAdistr$ and $\cR_\cAdistr$. If the update requester is non-adaptive, for every $t$: for every shard $i$, we have $D_i^t$  is an independent draw from the distribution of $\Sampler (D^t, p)$.
\end{lemma}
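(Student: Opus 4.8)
The plan is to prove the statement by induction on $t$, exploiting the fact that for a non-adaptive update requester the update sequence $u^{\le t}$ is fixed in advance and hence independent of all the internal sampler randomness. The key structural observation is that each shard $i$ evolves \emph{independently} of the other shards: shard $i$ is created by its own sampler $\Sampler_i$ and retrained only by its own $\cAsingle_i$, and the decision of whether to update shard $i$ on round $t$ depends only on $u^t$ and on shard $i$'s own state. So I would actually reduce the claim to a single-shard statement and prove, for each fixed $i$ and each fixed update sequence $u^{\le t}$, that $D_i^t$ is distributed exactly as $\Sampler(D^t, p)$, i.e. each element of $D^t$ is included independently with probability $p$.

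The base case $t=0$ is immediate: $D_i^0 = \Sampler_i(D^0, p)$ by construction of $\cAdistr$, which has exactly the law of $\Sampler(D^0,p)$. For the inductive step, fix $i$ and condition on $u^{\le t-1}$; by the inductive hypothesis $D_i^{t-1}$ has the law of $\Sampler(D^{t-1},p)$, meaning $D_i^{t-1}$ is obtained from $D^{t-1}$ by independent $p$-coin flips, one per element. Now consider round $t$ with $u^t = (z^t, \bullet^t)$, which is a fixed quantity since the requester is non-adaptive. Two cases:
\begin{itemize}
\item If $\bullet^t = \mathtt{'delete'}$: then $D^t = D^{t-1}\setminus\{z^t\}$, and the algorithm removes $z^t$ from $D_i^{t-1}$ precisely when $z^t \in D_i^{t-1}$ (equivalently when $i \in S$). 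After this operation $D_i^t = D_i^{t-1}\setminus\{z^t\}$ regardless of whether $z^t$ was present, so $D_i^t$ is obtained from $D^t = D^{t-1}\setminus\{z^t\}$ by keeping exactly the independent $p$-coin outcomes for the surviving elements --- i.e. $D_i^t \sim \Sampler(D^t,p)$. (One must note that if $z^t$ appears with multiplicity in the multiset, the update operation $\circ$ removes one copy, and the sampler's per-copy coin flips are handled consistently; this is the one bookkeeping point to be careful about.)
\item If $\bullet^t = \mathtt{'add'}$: then $D^t = D^{t-1}\cup\{z^t\}$, and the algorithm runs $\Sampler_i(\{z^t\},p)$, i.e. flips a fresh independent $p$-coin for $z^t$, and adds $z^t$ to $D_i^{t-1}$ iff that coin is heads. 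Since this coin is fresh and independent of the coins determining $D_i^{t-1}$, the resulting $D_i^t$ is exactly $D^t$ with one independent $p$-coin flip per element, i.e. $D_i^t \sim \Sampler(D^t,p)$.
\end{itemize}
In both cases the conditional law of $D_i^t$ given $u^{\le t}$ does not depend on $u^{\le t}$ beyond determining $D^t$, so marginally $D_i^t \sim \Sampler(D^t,p)$, completing the induction.

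\textbf{Main obstacle.} The conceptual content is light; the real care is (a) being precise that ``$D_i^t$ is an independent draw from $\Sampler(D^t,p)$'' should be read as a statement about the marginal law of $D_i^t$ with the randomness being the sampler coins, and that this holds \emph{because} non-adaptivity makes $u^{\le t}$ a constant with respect to that randomness --- if the requester were adaptive, $u^t$ could depend on $D_i^{t-1}$ through the published output, breaking the independence of the retained coins; and (b) the multiset bookkeeping: formalizing the sampler as assigning an independent $p$-coin to each \emph{occurrence} in the multiset so that deletions and additions compose cleanly with $\circ$. I would handle (b) by fixing this per-occurrence coin representation at the outset of the proof. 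Everything else is a routine verification.
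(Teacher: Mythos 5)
Your proposal is correct and takes essentially the same route as the paper: induction on $t$, with the base case immediate from the construction of $\cAdistr$, and the inductive step using non-adaptivity to treat $u^t$ as fixed (hence independent of the sampler coins) so that each element of $D^t$ remains included independently with probability $p$ after a deletion or a fresh-coin addition. If anything, your write-up is slightly more careful than the paper's, which formally verifies only the marginal inclusion probability $\Pr[z \in D^{\tau}_i] = p$ per element, whereas you explicitly track the full product structure and the per-occurrence multiset bookkeeping.
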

\begin{proof}
We prove this via induction. It's easy to see that this holds true at round $t=0$ because we explicitly set $D^0_i = \Sampler(D^0, p)$. 
Now, suppose that $D^{\tau-1}_i$ is an independent draw from the distribution of $\Sampler(D^{\tau-1}, p)$ for some $\tau \ge 1$. If the update request $u^\tau = (z^\tau, \mathtt{'delete'})$ is a deletion request, then it's easy to see that simply deleting the point $z^\tau$ from every shard that contains it will maintain that each element is chosen to be in the shard with probability $p$. And $D^{\tau}_i | u^{\tau-1}$ and $D^{\tau}_i | u^{\tau}$ must be identically distributed because the update request $u^\tau$ is non-adaptive and has been fixed prior to the interaction --- and hence is statistically independent of $D^{\tau-1}$. More formally, we have that for any $z \in D^\tau$,
\begin{align*}
    \Pr[z \in D^{\tau}_i ] = \Pr[z \in D^{\tau}_i | u^{\le \tau}] = \Pr[z \in D^{\tau}_i | u^{\le \tau-1}] = \Pr[z \in D^{\tau-1}_i | u^{\le \tau-1}] = p.
\end{align*}

The same argument applies for the addition request where $\cR_\cAdistr$ adds the element requested to be added with probability $p$. More formally, we have $\Pr[z \in D^{\tau}_i] = p$ for any $z \in D^{\tau-1}$ and $\Pr[z^\tau \in D^\tau_i] = p$ by construction.
\end{proof}

\lemNonAdaptive*
\begin{proof}
Fix any arbitrary round $t \in [T]$. For a non-adaptive $\updreq$, we can think of the update sequence $u^{\leq t}$ as fixed prior to the start of the interaction between the learning procedure and the $\updreq$. Now, in order to show $(0,0,0)$-deletion guarantee of the unlearning algorithm, we need to show that for any $E \subseteq \Theta^*$,
\[
    \Pr\left[\cR_{\cAdistr}(D^{t-1}, u^t, s^{t-1}) \in E | u^{\le t}\right] = \Pr\left[\cAdistr(D^t) \in E\right]. 
\]
Note that it is equivalent to show that for any $i \in [k]$ and $E \subseteq \Theta$, we have
\[
\Pr\left[\theta^t_i \in E| u^{\le t} \right] = \Pr\left[\cAsingle_i(\Sampler_i(D^t, p)) \in E\right]
\]
because $\Sampler_i$ and $\cAsingle_i$ behave independently across $i \in [k]$ in both $\cR_{\cAdistr}$ and $\cAdistr$. Hence, from here on, we focus on some fixed $i \in [k]$.

Now, we argue that it is sufficient to show that the distribution over $D^{t}_i$ conditional on $u^{\le t}$ that is being kept in the state $s^t$ of the unlearning algorithm is exactly the same as that of $\Sampler_i(D^t, p)$, which we have already proved in Lemma~\ref{lem:shard-dist}. Using the fact that update sequence is non-adaptive with respect to the algorithm's randomness, we have for any realization path for shard $i$ until round $t$ (i.e. how the initial shard $D^0_i$ was formed and whether each addition request until round $t$ was actually added to shard $i$ or not)
\begin{align*}
    \Pr[\theta^t_i \in E | u^{\le t}] &= \Pr[\theta^{t'}_i  \in E| u^{\le t}]\\
    &= \Pr[\theta^{t'}_i \in E| u^{\le t'}]\\
    &= \Pr[\cAsingle_i(D_i^{t'}) \in E| u^{\le t'}]\\
    &= \Pr[\cAsingle_i(D_i^{t}) \in E| u^{\le t}]
\end{align*}
where $t' = \min \{\tau \le t: D_i^{\tau} = D^t_i\}$ is the time at which we last trained the model for shard $i$ in the unlearning algorithm.
\end{proof}

\thmAdaptiveDeletion*
\begin{proof}
Lemma \ref{lem:non-adaptive} provides that $\cR_{\cAdistr}$ is a $(0,0,0)$-unlearning algorithm for $\cAdistr$ against any nonadaptive update requester.

Note that because the randomness used in each shard $i \in [k]$ is always independent and there is a symmetry across these shards in both $\cAdistr$ and $\cR_{\cAdistr}^{\text{iter}}$, we can imagine drawing all the randomness required for each shard throughout the interaction prior to the interaction $r \sim \cP^k$ such that each shard $i \in [k]$ relies $r_i$ on as the source of its randomness. 

Now, note that the state kept by $\cR_\cAdistr$ consists of the shards $\{D^{t-1}_i\}_i$ and the models trained via $\cAsingle$ on those shards $\{\theta^{t-1}_i\}_i$. Hence, at any round $t$, given access to initial dataset $D^0$, previous update requests $u^{\le t-1}$, and the randomness that has been drawn prior to the interaction $r$, we can deterministically determine the state $s^{t-1} = (\{D^{t-1}_i\}_i, \{\theta^{t-1}_i\}_i)$, meaning there exists some deterministic mapping $g^{t-1}$ such that $s^{t-1} = g^{t-1}(D^0, u^{\le t-1}, r)$. 

Therefore, we can combine the $(0,0,0)$-deletion guarantee promised by Lemma \ref{lem:non-adaptive} with Theorem \ref{thm:general-theorem} to conclude that $\cR_\cAdistr$ must be $(\alpha, \beta, \gamma)$-unlearning algorithm for $\cAdistr$.
\end{proof}

\thmRunTime*
\begin{proof}
Throughout we use $Bin(k,p)$ to denote a binomial random variable with parameters $k$ (number of trials) and $p$ (success probability). First we state the following fact:
\begin{fact}[Binomial Tail Bound]\label{fact:binom}
Let $X \sim Bin(k,p)$ and let $\mu := kp$. We have that for every $\eta \ge 0$,
\[
\Pr \left[ X \ge (1+\eta)\mu \right] \le e^{-\frac{\eta^2 \mu}{2 + \eta}}
\]
which in turn implies, for every $\delta$, with probability at least $1-\delta$,
\[
X \le \left( 1 + \frac{\sqrt{\log^2 \left( 1 / \delta \right) + 8 \mu \log \left( 1 / \delta \right)} -\log \left( 1 / \delta \right)}{2 \mu} \right) \mu \le \mu + \sqrt{2 \mu \log \left( 1 / \delta \right)}
\]
\end{fact}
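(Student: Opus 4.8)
The plan is to prove the two displayed inequalities in turn. The first is the standard multiplicative Chernoff bound for the upper tail of a binomial, which I would establish by the exponential-moment method; the second is obtained purely by inverting the first in $\eta$ and then simplifying algebraically, so the only genuinely probabilistic content lies in the first inequality.

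For the Chernoff bound, I would write $X = \sum_{i=1}^k X_i$ with the $X_i$ i.i.d.\ $\mathrm{Bernoulli}(p)$, so that $\mu = kp = \E[X]$. For any $s > 0$, Markov's inequality applied to $e^{sX}$ gives $\Pr[X \ge (1+\eta)\mu] \le e^{-s(1+\eta)\mu}\,\E[e^{sX}]$, and by independence $\E[e^{sX}] = (1 - p + p e^{s})^k \le e^{\mu(e^{s}-1)}$, using $1 + x \le e^{x}$ with $x = p(e^{s}-1)$. Optimizing the resulting bound $e^{\mu(e^{s}-1) - s(1+\eta)\mu}$ over $s$ by setting $s = \ln(1+\eta)$ yields the classical closed form
\[
\Pr[X \ge (1+\eta)\mu] \le \left( \frac{e^{\eta}}{(1+\eta)^{1+\eta}} \right)^{\mu} = e^{-\mu\left[(1+\eta)\ln(1+\eta) - \eta\right]}.
\]
It then remains to show $(1+\eta)\ln(1+\eta) - \eta \ge \eta^2/(2+\eta)$ for all $\eta \ge 0$, which I would deduce from the elementary scalar inequality $\ln(1+\eta) \ge 2\eta/(2+\eta)$: setting $\phi(\eta) = \ln(1+\eta) - 2\eta/(2+\eta)$, one checks $\phi(0) = 0$ and $\phi'(\eta) = \eta^2 / \left[ (1+\eta)(2+\eta)^2 \right] \ge 0$, hence $\phi \ge 0$ on $[0,\infty)$; multiplying through by $(1+\eta)$ and rearranging gives exactly the needed lower bound on the exponent.

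For the implication, I would invert the bound just proved. Writing $L = \log(1/\delta)$, the requirement $e^{-\eta^2\mu/(2+\eta)} \le \delta$ is equivalent to $\eta^2\mu/(2+\eta) \ge L$, i.e.\ $\mu\eta^2 - L\eta - 2L \ge 0$. Since $\eta \mapsto \eta^2/(2+\eta)$ is increasing on $[0,\infty)$ (its derivative is $(\eta^2 + 4\eta)/(2+\eta)^2 \ge 0$), it suffices to take $\eta$ equal to the positive root of the quadratic $\mu\eta^2 - L\eta - 2L = 0$, which is the threshold appearing in the middle display; this establishes $X \le (1+\eta)\mu$ with probability at least $1-\delta$. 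The final clean form then follows from subadditivity of the square root, $\sqrt{L^2 + 8\mu L} \le L + \sqrt{8\mu L}$, which collapses the threshold to $\eta\mu \le \sqrt{8\mu L}/2 = \sqrt{2\mu L}$ and hence to $X \le \mu + \sqrt{2\mu \log(1/\delta)}$.

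The only step that is not completely mechanical is the scalar inequality $\ln(1+\eta) \ge 2\eta/(2+\eta)$, equivalently $(1+\eta)\ln(1+\eta) - \eta \ge \eta^2/(2+\eta)$, since this is precisely what converts the raw Chernoff expression into the stated closed form; the Markov/moment-generating-function argument and the quadratic inversion are otherwise routine. In carrying out the details I would be careful to justify, via the monotonicity of $\eta \mapsto \eta^2/(2+\eta)$, that the positive root is the correct threshold, and to verify the square-root subadditivity step that produces the final $\mu + \sqrt{2\mu\log(1/\delta)}$ bound.
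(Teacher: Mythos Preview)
The paper does not prove this Fact at all; it is stated as a standard tail bound and used as a black box inside the proof of Theorem~\ref{thm:run-time}. So there is no ``paper's proof'' to compare against, and your overall strategy---the exponential-moment derivation of the multiplicative Chernoff bound, the scalar inequality $(1+\eta)\ln(1+\eta)-\eta \ge \eta^2/(2+\eta)$ via $\ln(1+\eta)\ge 2\eta/(2+\eta)$, and then inversion of the tail bound in $\eta$---is exactly the textbook route and is correct for the first display.

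There is, however, a genuine slip in your inversion step that you should not gloss over. The positive root of your quadratic $\mu\eta^2 - L\eta - 2L = 0$ (with $L=\log(1/\delta)$) is
\[
\eta^\star \;=\; \frac{L + \sqrt{L^2 + 8\mu L}}{2\mu},
\]
not $\dfrac{\sqrt{L^2+8\mu L}-L}{2\mu}$ as written in the middle display of the Fact. (The latter is the positive root of $\mu\eta^2 + L\eta - 2L = 0$.) With the correct $\eta^\star$, the subadditivity step $\sqrt{L^2+8\mu L}\le L+\sqrt{8\mu L}$ yields $(1+\eta^\star)\mu \le \mu + L + \sqrt{2\mu L}$, not $\mu + \sqrt{2\mu L}$. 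In other words, the second displayed chain in the Fact, as printed, does not follow from the first display---this appears to be a sign typo in the paper's statement---and your proposal simply asserts the match without checking it. When you write up the details, either (i) note the typo and carry the extra additive $\log(1/\delta)$ through (harmless for the paper's applications, where it only changes constants in Theorem~\ref{thm:run-time}), or (ii) verify directly that the printed middle expression cannot be the threshold coming from $e^{-\eta^2\mu/(2+\eta)}\le\delta$.
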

Fix any round $t \ge 1$ of the update, and let $\mu = kp$ throughout. Suppose the update requester is non-adaptive. If the update of round $t$ is an addition, then $N^t \sim Bin(k,p)$ by construction. If the update of round $t$ is a deletion: $u^t = (z^t, \mathtt{'delete'})$, then 
\[
N^t = \sum_{i=1}^k \ind \left[ z^t \in D_i^{t-1} \right]
\]
But the update requester being non-adaptive (implying $z^t$ is independent of the randomness of the algorithms), together with Lemma~\ref{lem:shard-dist}, imply that $N^t$ is a sum of \emph{independent} Bernoulli random variables with parameter $p$; hence, $N^t \sim Bin(k,p)$. Therefore, if the update requester is non-adaptive, we can apply Fact~\ref{fact:binom} to conclude that for every $\xi$, with probability at least $1-\xi$, we have
\[
N^t \le \mu + \sqrt{2 \mu \log \left( 1 / \xi \right)}
\]
which proves the first part of the theorem for the choice of $p=1/k$. Now suppose the update requester is adaptive. If the update of round $t$ is an addition, then $N^t \sim Bin(k,p)$ by construction, and therefore using Fact~\ref{fact:binom}, with probability at least $1-\xi$, we have
$
N^t \le \mu + \sqrt{2 \mu \log \left( 1 / \xi \right)}
$.
Now suppose the update is a deletion: $u^t = (z^t, \mathtt{'delete'})$. We have in this case that
\[
N^t = \sum_{i=1}^k \ind \left[ z^t \in D_i^{t-1} \right]
\]
First note that we have the following upper bound
\begin{equation}\label{eq:worstcase1}
    N^t \le \sup_{z \in D^{t-1}} \sum_{i=1}^k \ind \left[ z \in D_i^{t-1} \right] \le \sup_{z \in D^0 \cup \{ z^1, \ldots, z^{t-1} \}} \sum_{i=1}^k \ind \left[ z \in D_i^{t-1} \right]
\end{equation}
where $\{ z^1, \ldots, z^{t-1} \}$ are the data points that have been requested to be added or deleted by the update requester in the previous rounds. Here, in the worst case (to get upper bounds), we are assuming that all previous $t-1$ updates are addition requests. Note that for every $z \in D^0 \cup \{ z^1, \ldots, z^{t-1} \}$, the number of shards that contain $z$ is an independent draw from a $Bin(k,p)$ distribution, by construction. We therefore have that
\begin{equation}\label{eq:worstcase2}
\sup_{z \in D^0 \cup \{ z^1, \ldots, z^{t-1} \}} \sum_{i=1}^k \ind \left[ z \in D_i^{t-1} \right] \overset{d}{=} \sup_{1 \le j \le n+t-1} X_j
\end{equation}
where the equality is in distribution, and $X_j \sim Bin(k,p)$. Now, combining Equations~\eqref{eq:worstcase1} and \eqref{eq:worstcase2}, and using Fact~\ref{fact:binom}, we get that for every $\eta \ge 0$,
\[
\Pr \left[ N^t \ge \left( 1 + \eta \right) \mu \right] \le \sum_{j = 1}^{n+t-1} \Pr \left[ X_j \ge \left( 1 + \eta \right) \mu \right] \le (n+t) e^{-\frac{\eta^2 \mu}{2 + \eta}} 
\]
which implies, for every $\xi \ge 0$, with probability at least $1-\xi$,
\begin{equation}\label{eq:bound1}
N^t \le \mu + \sqrt{2 \mu \log \left( (n+t) / \xi \right)}.
\end{equation}
We will prove another upper bound using the max-information bound. Recall that our distributed algorithms can be seen as drawing all the randomness $r \sim \cP^k$ upfront for some distribution $\cP$ (one draw from $\cP$ per shard). Since the update sequence $u^{\le t}$ (which is a post processing of the published objects) is guaranteed to be $(\epsilon, \delta)$-differentially private in $r$, we get using the max-information bound that, for every $\eta \ge 0$,
\begin{equation}\label{eq:maxinfos}
\Pr \left[ N^t \ge \left( 1 + \eta \right) \mu \right] \le e^{\epsilon'} \Pr_{(r \otimes u^{\le t})} \left[ N^t \ge \left( 1 + \eta \right) \mu \right] + \delta'
\end{equation}
where on the left hand side the probability is taken with respect to the joint distribution of $r$ and $u^{\le t}$, and on the right hand side $(r \otimes u^{\le t})$ means $r$ and $u^{\le t}$ are drawn independently from their corresponding marginal distributions. But when $r$ and $u^{\le t}$ are drawn independently (i.e., the update requester is non-adaptive), $N^t \sim Bin(k,p)$ as we have shown in the first part of this theorem.
\begin{equation}\label{eq:nonadaptives}
\Pr_{(r \otimes u^t)} \left[ N^t \ge \left( 1 + \eta \right) \mu \right] = \Pr \left[ Bin (k,p) \ge \left( 1 + \eta \right) \mu \right] \le e^{-\frac{\eta^2 \mu}{2 + \eta}}
\end{equation}
Therefore, combining Equations~\eqref{eq:maxinfos} and \eqref{eq:nonadaptives}, we get that
\[
\Pr \left[ N^t \ge \left( 1 + \eta \right) \mu \right] \le e^{\epsilon'-\frac{\eta^2 \mu}{2 + \eta}} + \delta'
\]
which in turn implies, for every $\xi > \delta'$, with probability at least $1-\xi$,
\begin{equation}\label{eq:bound2}
N^t \le \mu + \sqrt{2 \mu \left( \epsilon' + \log \left( 1 / (\xi - \delta') \right) \right)}
\end{equation}
Combining the bounds of Equations~\eqref{eq:bound1} and \eqref{eq:bound2}, we get that for every $\xi > \delta'$, with probability $1-\xi$,
\[
N^t \le \mu + \min \left\{ \sqrt{2 \mu \log \left( 2(n+t) / (\xi - \delta') \right)}, \sqrt{2 \mu \left( \epsilon' + 2\log \left( 2 / (\xi - \delta') \right) \right)} \right\}
\]
which completes the proof by the choice of $p=1/k$ ($\mu = k p = 1$).
\end{proof}

\begin{algorithm}[h]
\SetAlgoLined
\SetNoFillComment
\begin{algorithmic}
\STATE $l = 0$
\FOR{$t=1, \dots, T$}
    \IF{$l > \left\lfloor \frac{\epsilon^2}{8(\epsilon')^2 \ln(\frac{1}{\delta})} \right\rfloor$ \tcp{``Restart'' $\cR_\cAdistr$ when privacy budget is exhausted}}
        \STATE $D^{t}_i = \Sampler_i(D^{t}, p)$ and $\theta^{t}_i = \cAsingle_i(D^t_i)$ for each $i \in [k]$
        \STATE Update $s^t=(\{D^t_i\}_i, \{\theta^t_i\}_i)$ 
        \STATE $l = 0$ 
    \ELSE
        \STATE $\{\theta^t_i\}_i = \cR_{\cAdistr}(D^{t-1}, u^t, s^{t-1})$ 
    \ENDIF
    \WHILE{there is a prediction request for some $x$}
        \STATE Publish $\hat{y} = \ppredict^k_{\epsilon'}(\{\theta^t_i\}_i, x)$
        \STATE $l = l+ 1$
    \ENDWHILE
\ENDFOR
\end{algorithmic}
\caption{$\mathtt{PrivatePredictionInteraction}(\epsilon', \epsilon, \delta, k)$
}
\label{alg:aggregation}
\end{algorithm}

\begin{lemma}
\label{lem:num-pred-requests}
Assume $\epsilon < 1$ and $\delta > 0$. Then, $(\hat{y}_1, \dots, \hat{y}_l)$ is $(\epsilon, \delta)$-differentially private in $\{\theta_i\}_i$ where $\hat{y}_j = \ppredict^k_{\epsilon'}(\{\theta_i\}, x_j)$ and
\[
    l=\left\lfloor \frac{\epsilon^2}{8(\epsilon')^2 \ln(\frac{1}{\delta})} \right\rfloor.
\]
\end{lemma}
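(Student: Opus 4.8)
The plan is to obtain the statement as an immediate consequence of the advanced (adaptive) composition theorem for differential privacy. First I would record the single-query guarantee: by construction of $\ppredict^k_{\epsilon'}$, each published label $\hat{y}_j = \ppredict^k_{\epsilon'}(\{\theta_i\}_i, x_j)$ is $(\epsilon', 0)$-differentially private as a function of the ensemble $\{\theta_i\}_i$, and the internal coins used on different prediction requests are independent. Hence the map $\{\theta_i\}_i \mapsto (\hat{y}_1, \ldots, \hat{y}_l)$ is precisely an $l$-fold \emph{adaptive} composition of $(\epsilon', 0)$-DP mechanisms --- ``adaptive'' because the query point $x_j$, and hence the $j$-th mechanism, may be chosen as a function of the previously published labels $\hat{y}_1, \ldots, \hat{y}_{j-1}$, which is exactly the setting covered by the composition theorem.

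Next I would invoke the quantitative form of advanced composition (see \cite{DR14}): for any $0 < \epsilon < 1$ and $\delta > 0$, the $l$-fold adaptive composition of $(\epsilon', 0)$-differentially private mechanisms is $(\epsilon, \delta)$-differentially private provided $\epsilon' \le \epsilon / \bigl(2\sqrt{2 l \ln(1/\delta)}\bigr)$. Rearranging this inequality gives $l \le \epsilon^2 / \bigl(8 (\epsilon')^2 \ln(1/\delta)\bigr)$, and since $l$ is a nonnegative integer it suffices to take $l = \big\lfloor \epsilon^2 / \bigl(8 (\epsilon')^2 \ln(1/\delta)\bigr) \big\rfloor$, which is exactly the value in the statement. (If this floor equals $0$ the claim is vacuous, the published sequence being empty.) This completes the argument.

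The proof is short and essentially bookkeeping; the only points needing care are (i) confirming that successive, possibly adaptively chosen, prediction requests are still handled --- they are, since the composition theorem is stated for adaptive composition --- and (ii) matching the constant $1/8$ in the denominator of $l$, which pins down exactly which published form of advanced composition to cite (the $\delta'=0$ specialization of the standard bound $\epsilon'' = \epsilon'\sqrt{2l\ln(1/\delta)} + l\epsilon'(e^{\epsilon'}-1)$, valid in the regime $\epsilon' \le 1$, whose first term one bounds by $\epsilon/2$ and whose second by $\epsilon/2$ using $\epsilon < 1$). I note that the strictly weaker route via basic composition would give only $(l\epsilon',0)$-DP, i.e. $l = \epsilon/\epsilon'$, which does not match the claimed bound, so the advanced composition theorem is essential here.
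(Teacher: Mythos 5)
Your proof is correct and follows exactly the same route as the paper: the paper's proof is a one-line appeal to the $(\epsilon',0)$-differential privacy of each call to $\ppredict^k_{\epsilon'}$ together with the advanced composition theorem (Corollary 3.21 of \cite{DR14}), which is precisely the bound $\epsilon' \le \epsilon/\bigl(2\sqrt{2l\ln(1/\delta)}\bigr)$ you invoke and rearrange. Your write-up simply fills in the bookkeeping (adaptivity of the queries, the constant $1/8$) that the paper delegates to the citation.
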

\begin{proof}
This claim holds immediately by the $(\epsilon', 0)$-differential privacy of $\ppredict^k_{\epsilon'}$ and the advanced composition theorem. See Corollary 3.21 in \cite{DR14} for details.
\end{proof}

\thmFinal*
\begin{proof}
Suppose full retraining occurs in rounds $(t_1, t_2, \dots, t_G)$ where we always have $t_1 = 0$ and $l > \left\lfloor \frac{\epsilon^2}{8(\epsilon')^2 \ln(\frac{1}{\delta})} \right\rfloor$ at round $t_g$ for any $g> 1$. 

At any round $t_g$ when full retraining occurs, we can imagine restarting $\cR_\cAdistr$ by resetting the internal round as $t=0$ and drawing fresh randomness $r \sim \cP^k$, which determines the new initial state $s^0$. Therefore, for any $g \in [G-1]$ and $t_g \le t < t_{g+1}$, we must have that $\{\publish^{t'}\}_{t_g \le t' \le t}$ are $(\epsilon, \delta)$-differentially private in the randomness $r$ drawn in round $t_g$. Then, we can appeal to Theorem~\ref{thm:adaptive-deletion} to conclude that for any $g \in [G-1]$ and $t_g \le t < t_{g+1}$, we have 
\begin{align*}
    \forall E \subseteq \Theta^*: \quad 
    \Pr \left[ \{\theta^t_i\}_i \in E \, \middle\vert \, (u_{t_g}, \dots, u_t)\right]
    \le
    e^{\alpha} \cdot \Pr \left[ \cA \left( D^t \right) \in E  \right] + \beta.
\end{align*}
Because we are redrawing fresh randomness $r \sim \cP^k$ at $t_g$, we can combine combine all the previous unlearning guarantees in the previous $(t_{g'-1},t_{g'})$ for $g' < g$ to conclude that at any round $t \in [T]$
\begin{align*}
    \forall E \subseteq \Theta^*: \quad 
    \Pr \left[ \{\theta^t_i\}_i \in E \, \middle\vert \, u^{\le t}\right]
    \le
    e^{\alpha} \cdot \Pr \left[ \cA \left( D^t \right) \in E  \right] + \beta.
\end{align*}

\end{proof}

\section{Details From Section \ref{sec:experiment}}

\subsection{Proof of Theorem \ref{thm:sisa-failure}}
\thmSisaFailure*

\begin{proof}

\begin{algorithm}[t]
\SetAlgoLined 
\begin{algorithmic}
\STATE \textbf{Input}: dataset $D \equiv D^0$ of size $n$
\STATE Draw the shards: $D^0_{i \in [k]} = \mathtt{RandomAssignPartition}(D^0, k)$.
\STATE Train the models: $\theta^0_{i \in [k]} = \cAsingle (D^0_i)$, for every $i \in [k]$.
\STATE Save the state: $s^0 = (\{D^0_i\}_{i \in [k]}, \{\theta^0_i\}_{i \in [k]})$
\STATE \textbf{Output}: $\{\theta^0_i\}_{i \in [k]}$
\end{algorithmic}
\caption{$\mathcal{A}^{\text{SISA}}$: Learning Algorithm for SISA}
\label{alg:learning-sisa}
\end{algorithm}

\begin{algorithm}[t]
\SetAlgoLined 
\begin{algorithmic}
\STATE \textbf{Input}: dataset $D^{t-1}$, update $u^t=(z^t, \bullet^t)$, state $s^{t-1} = (\{D^{t-1}_i\}_{i \in [k]}, \{\theta^{t-1}_i\}_{i \in [k]})$
\IF{$\bullet^t = \mathtt{'delete'}$}
	\STATE $i =  j \in [k] \text{, where } z^t \in D^{t-1}_j$ 
\ELSE
    \STATE $i = \mathtt{randint}(1, 2, \ldots, k)$
\ENDIF
\STATE Update the shards: $D^{t}_i = \begin{cases}
	D^{t-1}_j \circ u^t & \text{if $i = j$}\\
	D^{t-1}_j & \text{otherwise}
\end{cases}$, for every $j \in [k]$.
\STATE Update the models: $\theta^{t}_j = \begin{cases}
	\cAsingle (D^{t}_j) & \text{if $i = j$}\\
	\theta^{t-1}_j & \text{otherwise}
\end{cases}$, for every $i \in [k]$.
\STATE Update the state: $s^t = (\{D^t_i\}_{i \in [k]}, \{\theta^t_i\}_{j \in [k]})$
\STATE \textbf{Output}: $\{\theta^t_i\}_{i \in [k]}$
\end{algorithmic}
\caption{$\mathcal{R}_{\mathcal{A}^{\text{SISA}}}$: Unlearning Algorithm for SISA: $t$'th round of unlearning}
\label{alg:unlearning-sisa}
\end{algorithm}

Define $\cA^{\text{SISA}}$ and $\cR_{\cA^{\text{SISA}}}$ as Algorithms \ref{alg:learning-sisa} and \ref{alg:unlearning-sisa} respectively instantiated with the ``lookup table'' model  $\cAsingle(D) = D$ and ``lookup table'' prediction rule $f_\theta$. In Algorithm \ref{alg:learning-sisa}, $\mathtt{RandomAssignPartition}(D,k)$ assigns every $(x,y) \in D$ to one of the $k$ partitions uniformly at random. The prediction rule, given parameter $\theta = D$ and query point $x$, outputs $y$ if $(x, y) \in \theta$ and $\bot$ otherwise:
\[
    f_\theta(x) = \begin{cases}
    y &\text{if } (x, y) \in \theta,\\
    \bot &\text{otherwise}.
    \end{cases}
\]

We wish to show that there exists a dataset $D^0$ and adaptive update requester $\mathtt{UpdReq}$ such that for some update step $t \geq 1$, with probability at least $1-\gamma$ over the draw of the update sequence $u^{\le t} = (u^1, \ldots, u^t)$ from $\updreq$, $\exists E \subseteq \Theta^*: \quad \Pr \left[ \cR_\cA \left(D^{t-1}, u^{t}, s^{t-1} \right) \in E \, \middle\vert \, u^{\le t}\right] > e^{\alpha} \cdot \Pr \left[ \cA \left( D^t \right) \in E  \right] + \beta$. We prove this with the following example, instantiated for $k = 3$. 

Consider dataset $D^0$ consisting of training examples $\{ (x_i, y_i) \}_{i \in [2n]}$, $n \in \mathbb{Z}^+$ such that $D^0$ contains 2 copies each of $n$ distinct feature vectors $x$. Both copies of each distinct feature vector $x$ are paired with the same (arbitrary) label  $y$. 

Further, given ensemble model parameters $\{ \theta_i \}_{i \in [k]} = \cAdistr(D)$, let the ensemble output the mode of the predictions made by the underlying models:
\[
    \hat{y_i} = \mathtt{Mode} \left( \left\{ f_{\theta_j}(x_i) \right\}_{j \in [k]} \right).
\]

Let $\psi^0$, the published object after initial training, be the ensemble's predictions for each training point: $\psi^0 = f_{\text{publish}}^0 = ( \hat{y}_1^0, \hat{y}_2^0, \dots, \hat{y}_{2n}^0 )$. 

Given these predictions, let $I = \{ i_1, i_2, \ldots, i_t \} \subseteq [2n]$ be the indices for the points which were classified correctly. That is, $\forall i \in [2n]: i \in I$ if $y_i = \hat{y}_i$. Given $\psi^0$, let $\updreq$ be a function which outputs the deletion sequence $(u^1, u^2, \ldots, u^t)$ where each update request is responsible for deleting one of the correctly predicted points: $\forall j \in [t]: u^j = ((x_{i_j}, y_{i_j}), \mathtt{'delete'})$.

Recall that our model is parameterized by a set of model parameters $\{ \theta_i \}_{i \in [k]}$ and each $\theta_i$ is the dataset that shard is trained on. We now  define the event $E$ of interest: the set of all models such that the ensemble attains zero accuracy on the remaining points $D^t = D^0 \circ (u^1, u^2, \ldots, u^t)$, which happens if and only if all identical points (both copies of the same point) fall into the same shard.
\[
  E^t = \left\{ \{\theta_i\}_{i \in [k]} \text{ where } | \left\{ \theta_{i}: (x, y) \in \theta_i \right\} | = 1 \text{ for all } (x, y) \in D^t\right\}
\]
To make our final assertion, first note that $\Pr[\cR_{\cA^{\text{SISA}}}(D^{t-1}, u^t, s^{t - 1}) \in E | u^{\leq t}] = 1$ as $\updreq$ has requested all the correctly classified points to be deleted. We therefore need to show that 
\[
\Pr \left[ \cR_{\cA^{\text{SISA}}} \left(D^{t-1}, u^{t}, s^{t-1} \right) \in E \, \middle\vert \, u^{\le t}\right] = 1 > e^{\alpha} \cdot \Pr \left[ \cA^{\text{SISA}} \left( D^t \right) \in E \right] + \beta
\]
equivalently, $\frac{1 - \beta}{e^\alpha} > \Pr \left[ \cA^{\text{SISA}} \left( D^t \right) \in E \right]$ with probability $1-\gamma$ over the randomness of the update sequence (which in this case is simply the randomness of the initial partition). 

Note that $t$,  the number of copies of points that were initially classified correctly is distributed as $\mathtt{Binomial}(n, \frac{2}{3})$ because for each pair of identical $(x,y) \in D^0$, the probability that they fall in different shards initially is exactly $2/3$. Also, note that for any fixed $t \le n -1$, 
\begin{align*}
    \Pr[\cAdistr(D^t) \in E] = \frac{1}{3^{n-t}}.
\end{align*}

Using the tail bound for the Binomial distribution (Fact \ref{fact:binom}), we have that with probability $1-\gamma$,
\[
t \le \frac{2n}{3} + \sqrt{\frac{4n}{3}\log\left(\frac{1}{\gamma}\right)}.
\]

When $n \ge 13 \log(1/\gamma)$, we have $\frac{2n}{3} + \sqrt{\frac{4n}{3}\log\left(\frac{1}{\gamma}\right)} \le 0.99n$. Hence, for sufficiently large $n$, we can conclude that with probability $1-\gamma$, 
\[
    \Pr[\cAdistr(D^t) \in E] \le \frac{1}{3^{0.01n}}.
\]

Finally, for any $c = \frac{1 - \beta}{e^\alpha} > 0$, there exists a $D^0$ such that $c > \Pr \left[ \cA^{\text{SISA}} \left( D^t \right) \in E \right]$ with probability $1-\gamma$ because we can choose a sufficiently large $n$ such that $n \ge 13 \log(1/\gamma)$ and $\frac{1}{3^{0.01n}} \le c$, i.e., we can choose:
\[
n \ge \max \left\{ 13 \log(1/\gamma), \frac{100 \log (1/c)}{ \log 3}\right\}
\]

\end{proof}

\subsection{Failures in (0, 0, 0)-Unlearning Beyond Section \ref{sec:labelonly}} \label{sec:failures-in-unlearning}

Observable failures in unlearning guarantees for algorithms in the SISA framework go beyond the simplistic setting constructed in Section \ref{sec:labelonly}. In this section, we describe a more natural setting in which we employ the learning and unlearning algorithms for SISA $(\mathcal{A}, \mathcal{R_A})$ and are able to construct an adaptive deletion sequence (only given discrete predictions through $f_{\text{publish}}$) which, to a high degree of confidence, rejects the null hypothesis that $(\mathcal{A}, \mathcal{R_A})$ satisfy a perfect $(0, 0, 0)$-unlearning guarantee.

In Section \ref{sec:labelonly} we explicitly define a base model $f_\theta$ which relies on the fact that each point is copied twice to reveal perfect information about how points were partitioned through its predictions. Here, we define a new model which relaxes this condition. Given a query point $x$, rather than return the label of an exactly matching point, the model $f_{D, \tau}(x)$ is additionally parameterized by a threshold $\tau$. This model, reminiscent of 1-nearest neighbors, returns the label of the closest point $(x', y') \in D$ where $ |x - x'|_2 \leq \tau$, and $\bot$ otherwise, essentially treating nearby points as "identical."

Here we define $\cA^{\text{SISA}}$ and $\cR_{\cA^{\text{SISA}}}$ as Algorithms \ref{alg:learning-sisa} and \ref{alg:unlearning-sisa} respectively, instantiated with $\cAsingle(D) = D$ and prediction rule $f_{D, \tau}$. We assume the null hypothesis that $\cA$ and $\cR_{\cA}$ satisfy a $(0, 0, 0)$-unlearning guarantee.

To make an assertion about this hypothesis, we train an ensemble using three shards as before. We then execute a similar experiment to that as described in Section \ref{sec:labelonly} in which, after initial training, we publish the aggregated discrete predictions for each training point and delete a random subset of correctly classified points. We then observe the accuracy of the ensemble on the remaining training points. Our hypothesis, the same as before, is that the resulting accuracy will be lower in the adaptive deletion setting than the retrain setting with high probability.

We then define an event $E$ of interest to be when the training accuracy after the adaptive deletion sequence falls below a cutoff $c \in [0\%, 100\%]$ after deleting all correctly classified points. We can then estimate the probability of this event by defining an indicator for each trial which is 1 if the training accuracy falls below this threshold and 0 otherwise. We then run many trials to calculate confidence intervals on our estimate of this probability under either setting. If the confidence intervals are non-overlapping at some confidence level, we can then reject the null hypothesis at some level of confidence.

Our concrete experiment samples 1,000 random points from MNIST, each being either a "0" or "1" (preprocessing each image by dividing each pixel value by 255). With $\tau = 6.5$, this setting is "plausible" in the sense that this model's performance on held-out test data is nontrivial (approximately $91.2\%$ test accuracy before deletion) for a common benchmark task. We then delete $t$ points (a uniformly random subset of correctly predicted points), observe the average accuracies across trials on remaining points under the adaptive setting and the retrain scenario. We grid search for the $c$ which yields the largest difference in the confidence intervals (since the unlearning guarantee should hold for all $c$). Under these conditions we find that after 200 trials, we attain 97.5\% confidence intervals on our statistic to be those shown in Figure \ref{fig:labelonly}. We see that for deletion sequences of 200 points or more we can induce a reliable difference in this statistic at a high level of confidence, rejecting the null hypothesis at $p \leq 0.05$ that $\cA^{\text{SISA}}$ and $\cR_{\cA^{\text{SISA}}}$ satisfy a perfect $(0, 0, 0)$-unlearning guarantee.

\begin{figure}[t]
\centering
\includegraphics[width=8cm]{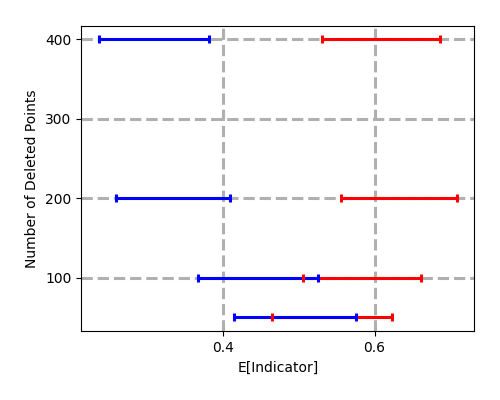}
\caption{Confidence intervals for the indicator defined in Section \ref{sec:failures-in-unlearning} as a function of the number of deleted points. Red confidence intervals correspond to the statistic after the adaptive deletion sequence, and blue confidence intervals correspond to the statistic after full retraining. For deletion sequences of 200 points or more we can induce a reliable enough difference in the confidence intervals to reject the null hypothesis at $p \leq 0.05$ that $\cA^{\text{SISA}}$ and $\cR_{\cA^{\text{SISA}}}$ satisfy a perfect $(0, 0, 0)$-unlearning guarantee in a realistic setting.}
\label{fig:labelonly}
\end{figure}

\subsection{Full Experiment Details of Section \ref{sec:fullmodelsetting}}

\begin{table}[h]
\centering
\begin{tabular}{lllll} \toprule
    {$\mathbb{E}$[Indicator]} & {Acc. (after)} & {Acc. (before)} & {Noise mult.} & {Shard pred. acc.} \\ \midrule
    
    $\text{CIFAR-10}_{k=6}$ \\
    \midrule
    $ [0.890, 0.952] $ & $ 0.507 \pm 0.025 $ & $ 0.572 \pm 0.011 $ & $    0    $ & $ 0.303 \pm 0.005     $ \\
    $ [0.784, 0.869] $ & $ 0.504 \pm 0.020 $ & $ 0.554 \pm 0.010 $ & $    0.15 $ & $ 0.257 \pm 0.004     $ \\
    $ [0.670, 0.772] $ & $ 0.487 \pm 0.021 $ & $ 0.525 \pm 0.011 $ & $    0.22 $ & $ 0.229 \pm 0.003     $ \\
    $ [0.490, 0.603] $ & $ 0.455 \pm 0.025 $ & $ 0.484 \pm 0.012 $ & $    0.3  $ & $ 0.205 \pm 0.003     $ \\
    \midrule
    
    $\text{CIFAR-10}_{k=2}$ \\
    \midrule
    $ [0.947, 0.987] $ & $ 0.448 \pm 0.033 $ & $ 0.521 \pm 0.019 $ & $    0    $ & $ 0.655 \pm 0.012     $ \\
    $ [0.797, 0.881] $ & $ 0.433 \pm 0.026 $ & $ 0.475 \pm 0.015 $ & $    0.2  $ & $ 0.587 \pm 0.007     $ \\
    $ [0.638, 0.744] $ & $ 0.419 \pm 0.025 $ & $ 0.452 \pm 0.015 $ & $    0.25 $ & $ 0.567 \pm 0.006     $ \\
    $ [0.493, 0.607] $ & $ 0.399 \pm 0.027 $ & $ 0.427 \pm 0.015 $ & $    0.3  $ & $ 0.550 \pm 0.005     $ \\
    \midrule 
    
    $\text{Fashion-MNIST}_{k=6}$ \\
    \midrule
    $ [0.819, 0.899] $ & $ 0.849 \pm 0.011 $ & $ 0.874 \pm 0.004 $ & $    0    $ & $ 0.248 \pm 0.007     $ \\
    $ [0.662, 0.765] $ & $ 0.838 \pm 0.011 $ & $ 0.854 \pm 0.005 $ & $    0.4  $ & $ 0.215 \pm 0.005     $ \\
    $ [0.540, 0.652] $ & $ 0.823 \pm 0.009 $ & $ 0.834 \pm 0.006 $ & $    0.6  $ & $ 0.198 \pm 0.004     $ \\
    $ [0.477, 0.590] $ & $ 0.810 \pm 0.012 $ & $ 0.820 \pm 0.006 $ & $    0.75 $ & $ 0.190 \pm 0.004     $ \\
    \midrule
    
    $\text{Fashion-MNIST}_{k=2}$ \\
    \midrule
    $ [0.976, 0.999] $ & $ 0.826 \pm 0.016 $ & $ 0.863 \pm 0.006 $ & $     0   $ & $ 0.597 \pm 0.014     $ \\
    $ [0.797, 0.881] $ & $ 0.808 \pm 0.013 $ & $ 0.828 \pm 0.007 $ & $     0.5 $ & $ 0.555 \pm 0.010     $ \\
    $ [0.607, 0.715] $ & $ 0.791 \pm 0.016 $ & $ 0.807 \pm 0.008 $ & $     0.7 $ & $ 0.538 \pm 0.007     $ \\
    $ [0.497, 0.610] $ & $ 0.763 \pm 0.020 $ & $ 0.781 \pm 0.009 $ & $     1   $ & $ 0.523 \pm 0.005     $ \\
    \midrule

    $\text{MNIST}_{k=6}$ \\
    \midrule
    $ [0.849, 0.922] $ & $ 0.973 \pm 0.004 $ & $ 0.978 \pm 0.002 $ & $     0   $ & $ 0.201 \pm 0.004     $ \\
    $ [0.729, 0.824] $ & $ 0.965 \pm 0.005 $ & $ 0.969 \pm 0.003 $ & $     0.4 $ & $ 0.186 \pm 0.003     $ \\
    $ [0.583, 0.694] $ & $ 0.940 \pm 0.009 $ & $ 0.945 \pm 0.004 $ & $     0.8 $ & $ 0.178 \pm 0.003     $ \\
    $ [0.493, 0.607] $ & $ 0.913 \pm 0.018 $ & $ 0.923 \pm 0.007 $ & $     1.1 $ & $ 0.176 \pm 0.003     $ \\
    \midrule
    
    $\text{MNIST}_{k=2}$ \\
    \midrule
    $ [0.927, 0.976] $ & $ 0.962 \pm 0.007 $ & $ 0.971 \pm 0.003 $ & $     0   $ & $ 0.540 \pm 0.006     $ \\
    $ [0.769, 0.857] $ & $ 0.959 \pm 0.008 $ & $ 0.968 \pm 0.003 $ & $     0.8 $ & $ 0.534 \pm 0.006     $ \\
    $ [0.587, 0.697] $ & $ 0.953 \pm 0.007 $ & $ 0.962 \pm 0.004 $ & $     1.3 $ & $ 0.530 \pm 0.006     $ \\
    $ [0.497, 0.610] $ & $ 0.949 \pm 0.008 $ & $ 0.957 \pm 0.004 $ & $     1.6 $ & $ 0.527 \pm 0.006     $ \\
    \midrule
\end{tabular}
\caption{Numerical representation of results displayed in Figure \ref{fig:exp-figs}. The $x$ axis in Figure \ref{fig:exp-figs} corresponds to column "$\mathbb{E}[\text{Indicator}]$", and the $y$ axis corresponds to column "Acc. (after)". Column "$\mathbb{E}[\text{Indicator}]$" represents the 95\% confidence interval of the indicator after 300 trials. Columns "Acc. (before)" and "Acc. (after)" represent the accuracy of the ensemble on a held-out test set (5,000 points each) before and after deleting approximately half of the points from the ensemble, with confidence intervals given by two standard deviations above and below the observed mean. "Noise multiplier" represents the standard deviation of Gaussian noise applied to each per-example gradient during DP-SGD. Shard prediction accuracy denotes the prediction accuracy of the adversary in targeting models when deleting points, where random guessing would achieve an accuracy of $1 / (\text{\# shards})$.}
\end{table}

Choices in hyperparameters and and model architecture for experiments presented in Section \ref{sec:experiment} were inspired by those used by \cite{temperedsigmoid}. All models were optimized using momentum with mass equal to 0.9. The clipping parameter (upper bound on maximum $\ell_2$-norm of per-example gradients) used in DP-SGD for all experiments was equal to 0.1. For certain experiments, the batch size was reduced from what was presented in \cite{temperedsigmoid} to reduce computational cost. Each experiment was repeated with new random seeds across 300 trials to get the confidence intervals displayed in Figure \ref{fig:exp-figs}. The precise model definition for each experiment is given below:

\begin{table}
\centering
\begin{tabular}{lcccccc} \toprule
Experiment                     & Points per shard  & Batch Size & Iterations & Step size    \\ \midrule
$\text{CIFAR-10}_{k=6}$        & 8000              & 64         & 4000    & 1.0          \\
$\text{CIFAR-10}_{k=2}$        & 8000              & 64         & 4000    & 1.0          \\
$\text{Fashion-MNIST}_{k=6}$   & 6000              & 256        & 1500    & 4.0          \\ 
$\text{Fashion-MNIST}_{k=2}$   & 6000              & 256        & 2000    & 4.0          \\ 
$\text{MNIST}_{k=6}$           & 6000              & 64         & 2500    & 0.5          \\
$\text{MNIST}_{k=2}$           & 6000              & 256        & 2000    & 0.5          \\
\midrule
\end{tabular}
\caption{Remaining hyperparameter settings for each experiment, by dataset.}
\end{table}

\begin{verbatim}
Sequential(
  Conv(out_chan=16, filter_shape=(8, 8), padding='SAME', strides=(2, 2)),
  Tanh,
  MaxPool(window_shape=(2, 2), strides=(1, 1)),
  Conv(out_chan=32, filter_shape=(4, 4), padding='VALID', strides=(2, 2)),
  Tanh,
  MaxPool(window_shape=(2, 2), strides=(1, 1)),
  Flatten,
  Dense(out_dim=32),
  Tanh,
  Dense(out_dim=num_classes)
)
\end{verbatim}

In our experiments we make use of 3 common benchmark machine learning datasets. The MNIST database of handwritten digits given by \cite{mnist} consists of 70,000 28 $\times$ 28 images of handwritten digits, each belonging to one of 10 classes characterizing the digit shown in each image. MNIST is made available under the Creative Commons Attribution-Share Alike 3.0 license. The Fashion-MNIST dataset given by \cite{fmnist} consists of 70,000 28 $\times$ 28 grayscale images of pieces of clothing, each belonging to one of 10 classes (e.g. t-shirt, dress, sneaker, etc.) Fashion-MNIST is made available under the MIT license. The CIFAR-10 dataset given by \cite{cifar10} consists of 60,000 32 $\times$ 32 images in RGB format, each belonging to one of 10 classes characterizing the class of the object given in each image (e.g. airplane, automobile, bird, etc.) CIFAR-10 is made available under the MIT license. 

With respect to computing environment, experiments were conducted using the JAX deep learning framework developed by \cite{jax}. Experiments were run using 1 Tesla V100 GPU using CUDA version 11.0, where an individual trial (training a full ensemble, deleting targeted points, and retraining) would take approximately 1-6 minutes depending on the number of shards, iterations, image size, etc.

\end{document}